%% file: icml2026/arxiv.tex
\documentclass{article}

\usepackage[utf8]{inputenc}
\usepackage[margin=1in]{geometry}

\PassOptionsToPackage{textsize=tiny}{todonotes}
\usepackage[dvipsnames]{xcolor}
\usepackage{microtype}
\usepackage{graphicx}
\usepackage{mwe} 
\usepackage[table]{xcolor} 
\usepackage{todonotes} 
\usepackage{subfigure}
\usepackage{wrapfig} 
\usepackage{placeins} 
\usepackage{booktabs} 
\usepackage[toc, page]{appendix}
\usepackage{caption}
\usepackage{lipsum}
\usepackage{bbm}
\usepackage{comment}
\usepackage{mathrsfs}
\usepackage{amssymb}

\usepackage[most]{tcolorbox} 
\usepackage{booktabs, tabularx} 
\usepackage{fontawesome5}
\usepackage{fvextra}         
\usepackage{breqn}           
\usepackage{pgfplots}
\usepackage{pgfplotstable}
\pgfplotsset{compat=1.18}
\usepackage{tikz}
\usetikzlibrary{positioning,patterns}

\definecolor{colmcolor}{RGB}{31,119,180}
\definecolor{greatscolor}{RGB}{255,127,14}
\definecolor{gradnormcolor}{RGB}{44,160,44}
\definecolor{methodcolor}{RGB}{214,39,40}
\definecolor{maxlosscolor}{RGB}{148,103,189}

\usepackage{booktabs, bigdelim, rotating}

\usepackage{natbib}
\usepackage{hyperref}

\usepackage{array}

\usepackage{algorithm}
\usepackage{algorithmic}

\makeatletter
\newcommand{\forcepage}[1]{%
  \clearpage
  \newcount\@fp@target\@fp@target=#1\relax
  \loop\ifnum\c@page<\@fp@target
    \null\newpage
  \repeat
}
\makeatother

\usepackage{amsmath}
\usepackage{amssymb}
\usepackage{mathtools}
\usepackage{amsthm}

\usepackage{thmtools}
\usepackage{thm-restate}

\usepackage[capitalize,noabbrev]{cleveref}

\usepackage[toc,page]{appendix}

\crefname{appendix}{Appendix}{Appendices}
\Crefname{appendix}{Appendix}{Appendices}

\theoremstyle{plain}
\newtheorem{theorem}{Theorem}[section]

\theoremstyle{definition}

\theoremstyle{remark}
\newtheorem{remark}[theorem]{Remark}

\input{icml2026/macros}

\input{icml2026/new_macros}

\usepackage{todonotes}
\usepackage{amsmath}

\providecommand{\icmltitlerunning}[1]{}

\providecommand{\icmlkeywords}[1]{}
\providecommand{\icmlsetsymbol}[2]{}
\providecommand{\icmlauthor}[2]{#1}
\providecommand{\icmlaffiliation}[2]{\par{\footnotesize\textit{#2}}}
\providecommand{\icmlcorrespondingauthor}[2]{}
\providecommand{\printAffiliationsAndNotice}[1]{}

\icmltitlerunning{Minibatch Selection for Language Models via Partition Matroid Constrained Gradient Matching}

\title{Minibatch Selection for Language Models via Partition Matroid Constrained Gradient Matching}

\newcommand{\arxivauthorblock}{%
\centering
\begin{tabular}{@{}>{\centering\arraybackslash}p{0.33\textwidth} >{\centering\arraybackslash}p{0.33\textwidth} >{\centering\arraybackslash}p{0.33\textwidth}@{}}
Prayas Agrawal$^{1,3}$\thanks{Equal contribution: Prayas Agrawal and Prateek Chanda.}\,\,\footnotemark[2] &
Prateek Chanda$^{1,2}$\footnotemark[1]\,\,\footnotemark[2] &
Ishita Khatri$^{2}$\\
Ganesh Ramakrishnan$^{1,2}$ &
Bamdev Mishra$^{4}$ &
Pratik Jawanpuria$^{2}$\thanks{Corresponding author: \texttt{pratik.jawanpuria@iitb.ac.in}, \texttt{agrawalprayas9@gmail.com}, \texttt{prateekch@cse.iitb.ac.in}.}\\[0.6em]
\multicolumn{3}{c}{$^{1}$Department of Computer Science and Engineering, Indian Institute of Technology Bombay}\\
\multicolumn{3}{c}{$^{2}$Centre for Machine Intelligence and Data Science, Indian Institute of Technology Bombay}\\
\multicolumn{3}{c}{$^{3}$Microsoft Research India}\\
\multicolumn{3}{c}{$^{4}$Microsoft India}\\
\end{tabular}%
}

\author{\arxivauthorblock}
\date{}

\begin{document}
\maketitle
\begingroup
\renewcommand{\thefootnote}{}
\footnotetext{Proceedings of the $\mathit{43}^{rd}$ International Conference on Machine Learning, Seoul, South Korea. PMLR 306, 2026. Copyright 2026 by the author(s).}
\addtocounter{footnote}{-1}
\endgroup

\begin{abstract}
Training large language models (LLMs) on heterogeneous data requires selecting minibatches that balance convergence speed with coverage across domains. Existing methods either select samples independently within each domain or rely on computationally expensive proxy models to learn continuous domain weights. We propose \textbf{$\methodprop$}, a cross-domain minibatch selection approach that maximizes a validation-guided gradient-matching utility under per-domain budgets encoded as a partition-matroid constraint. By coupling the per-domain budgets through a single utility, $\methodprop$ is designed to reduce redundancy in selections across domains. The proposed objective is weakly submodular and admits an orthogonal matching pursuit algorithm with provable approximation guarantees. Empirically, we evaluate $\methodprop$ for minibatch selection during the fine-tuning of \textsc{Qwen2.5} and \textsc{Llama-3} on \textsc{MetaMathQA} and \textsc{Mol-Instructions}. $\methodprop$ achieves robust gains over per-domain and domain-agnostic baselines on both benchmarks. It also reduces the number of conflicting gradient pairs within each batch, indicating that the cross-domain coupling translates into more compatible training updates. Code is available \href{https://github.com/efficiency-learning/PartitionSEL}{here}.
\end{abstract}

\addtocontents{toc}{\protect
\setcounter{tocdepth}{-1}}

\section{Introduction}
\label{Introduction}

Training large language models (LLMs) with larger mini-batches is known to improve convergence rates and can lead to superior performance. 
However, large mini-batches incur substantial GPU memory overhead. 
Recent works propose selectively training on targeted samples within each mini-batch \citep{wang2024greats, nguyen2024mini}, 
thereby significantly reducing memory consumption. The selection of such samples is typically performed using influence functions \citep{fahrbach2019submodular,pruthi2020estimating,gasteiger2022influence}, gradient-matching techniques \citep{MirzasoleimanCraig20a, killamsetty2021glister,killamsetty2021grad}, or distribution based matching approaches \citep{kim16ammd,gurumoorthy2019efficient,gurumoorthy21a,liu24aot,chanda2025uniprot}.

The minibatch selection problem becomes more challenging when the data includes multiple heterogeneous domains.
Several works emphasize on identifying optimal domain representations during training \citep{tishby15a,maurer16a,namkoong16a}. 
Consequently, recent works \citep{xie2023doremi,liu2024regmix} aim at optimizing the domain representation mixture  via proxy models which are trained alongside the main LLM. However, training proxy models involve  computational and practical concerns as one needs to ensure that the proxy model faithfully mirrors the optimization trajectory or emergent behaviors of the main LLM. 

In this work, we take a discrete, combinatorial view of domain-aware minibatch construction that sidesteps the need for any auxiliary proxy model. We encode per-domain capacities directly as a partition-matroid constraint on the minibatch and select samples jointly under this constraint by maximizing a common utility function across domains. Overall, the samples are rewarded not only for aligning with the signals within their own domain, but also for being non-redundant with samples chosen from other domains. The resulting objective is weakly submodular and admits an efficient orthogonal matching pursuit algorithm with provable approximation guarantees -- effectively inducing an implicit domain mixture at the batch level without the computational overhead of training a proxy alongside the LLM. In summary, our contributions are as follows:
\begin{itemize}
    \setlength{\itemsep}{0.1pt}
    \item \textbf{A unified, domain-coupled formulation.} We cast cross-domain minibatch selection as the maximization of a single, validation-guided gradient-matching utility subject to per-domain budgets encoded as a partition-matroid constraint. The proposed weighted-prototype objective, $\methodprop$, recovers the GREATS selection rule \citep{wang2024greats} as the binary special case, while, unlike GREATS ,being provably monotone and weakly submodular.
    \item \textbf{A discrete bridge to continuous data-mixture methods.} $\methodprop$ offers a principled connection between discrete subset selection and continuous reweighting strategies such as DoReMi~\citep{xie2023doremi}, by inducing a discrete, sample-level domain mixture at each training step.
    \item \textbf{Empirical gains and gradient-level evidence.} Across \textsc{Qwen2.5} and \textsc{Llama-3} fine-tuned on \textsc{MetaMathQA} and \textsc{Mol-Instructions}, $\methodprop$ based minibatch selection consistently improves upon strong baselines including GREATS~\citep{wang2024greats}, COLM~\citep{nguyen2024mini}, and DoReMi~\citep{xie2023doremi}. $\methodprop$ also reduces the number of conflicting gradient pairs within each batch, indicating that the proposed cross-domain coupling translates into more compatible training updates.
\end{itemize}

\subsection{Related Works}

\textbf{Coresets and subset selection for language models.}
Coreset methods select small training subsets whose aggregate gradient approximates that of the larger dataset. CRAIG \citep{mirzasoleiman2020coresets} and GradMatch \citep{killamsetty2021grad} formalize this as gradient-matching, while GLISTER \citep{killamsetty2021glister} adds a bilevel validation objective. Recent works extend these ideas to LLMs at the mini-batch level: CoLM \citep{nguyen2024mini} performs memory-efficient coreset selection in the Adam-normalized gradient space for fine-tuning on heterogeneous mixtures while  UniProT \citep{chanda2025uniprot} selects uniformly weighted mini-batch selection for improving minority class representation.  Our work shares this gradient-matching philosophy but \textit{couples} per-domain selections through a single validation-guided utility under partition-matroid constraints.


\noindent
\textbf{Validation-guided and influence-based selection.}
A complementary line of work scores examples by their estimated influence on a held-out signal using influence functions or gradient-based approximations \citep{koh17a,pruthi2020estimating}. LESS \citep{xia2024less} computes low-rank gradient features for targeted instruction tuning, and GREATS \citep{wang2024greats} proposes an efficient gradient-matching selection rule at the mini-batch level. Quality-scoring methods such as Ask-LLM and QuRating \citep{sachdeva2024askllm,wettig2024qurating} use model-derived signals to filter pretraining data, while DSIR \citep{xie2023dsir} performs importance resampling against a target distribution. $\methodprop$ similarly leverages validation guidance, but performs \textit{joint} selection across domains under explicit per-domain capacity constraints, which we show reduces redundancy and gradient conflicts within each batch.


\noindent
\textbf{Continuous data-mixture and reweighting methods.}
Rather than selecting discrete subsets, mixture methods learn continuous domain weights to reweight the training distribution. DoReMi \citep{xie2023doremi} optimizes the mixture via a group-DRO proxy, RegMix \citep{liu2024regmix} fits a regression surrogate over small-scale runs, and CLIMB \citep{diao2025climb} iteratively bootstraps clusters and weights. MATES \citep{yu2024mates} learns a model-aware data selector and Sheared-LLaMA \citep{xia2024sheared} couples data selection with (model) structured pruning. These approaches build on group-robust and distributionally robust optimization \citep{sagawa2019distributionally} but typically require auxiliary proxy models or extra optimization loops. Our batch-level constrained selection can be viewed as a discrete alternative that induces an \textit{implicit} mixture without an auxiliary proxy.


\section{Background}\label{sec:background}
\textbf{Notations}. The $n$-dimensional non-negative orthant is denoted by $\mathbb{R}^n_{+}$. Sets are denoted by calligraphic notation (e.g. $\S$). For any set $\A$, the corresponding index set is denoted by $\overline{\A}$.
Vectors are typeset in lowercase boldface (e.g., $\mathbf{x}$), 
and matrices are typeset in uppercase boldface (e.g., $\mathbf{X}$). Sub-vectors and sub-matrices indexed by index sets are denoted using bracketed subscripts (e.g., $\mathbf{x}_{[\overline{\mathcal{I}}]}$, $\mathbf{X}_{[\overline{\mathcal{I}}],[\overline{\mathcal{J}}]}$), where $\overline{\mathcal{I}}$ and $\overline{\mathcal{J}}$ are the index sets. The probability simplex set is denoted by $\Delta^n = \{ \mathbf{x} \in \mathbb{R}_+^n : \sum_{i=1}^n x_i = 1 \}$. The operator (spectral) norm of a matrix $\mathbf{A}$ is $\lVert \mathbf{A} \rVert_{\mathrm{op}} := \sup_{\lVert \mathbf{x} \rVert_2 = 1} \lVert \mathbf{A}\mathbf{x} \rVert_2$. We let $\uplus$ denote the disjoint union of sets and  $\mathbb{I}[\cdot]$ the indicator function. For a vector $\bx\geq \bzero$, $\mathsf{supp}(\bx) = \{j: \bx_j > 0\}$ denotes the \textit{support} of $\bx$.

\par\smallskip
\noindent
\textbf{Submodularity and submodularity ratio}. Let $f: 2^{\V} \rightarrow \RR$ be a set function over a ground set $\V$. For any subset $A \subseteq \V$ and element $e \in \V \setminus A$, the marginal gain is defined as $\bDelta f(e|A) = f(A \cup \{e\}) - f(A)$. The function $f$ is submodular if it exhibits diminishing returns: for all $A \subseteq B \subseteq \V$ and $e \in \V \setminus B$, $\bDelta f(e|A) \ge \bDelta f(e|B)$. 
The submodularity ratio $\gamma$ captures how close a non-submodular function is to being submodular: $f$ is said to be $\gamma$-weakly submodular for some $\gamma \in (0,1]$ if it satisfies: $\sum_{e\in B}\bDelta f(e|A)\geq \gamma \bDelta f(B|A)$ for all disjoint sets $A,B\subseteq \V$, where $\bDelta f(B \mid A) = f(A\cup B)-f(A)$. For submodular functions, $\gamma=1$. To maximize a monotonic $\gamma$-weakly submodular function under cardinality constraint, the classical greedy algorithm (based on marginal gains) achieves $(1-e^{-\gamma})$ approximation of the optimal value \citep{nemhauser1978analysis,das2018approximate}. 

\par\smallskip
\noindent
\textbf{RSC and RSM}\label{sec:RSCRSM}. A differentiable function $l: \RR^{d} \rightarrow \RR$ is \textit{restricted strong concave} (RSC) with parameter $c_{\Omega}$ and \textit{restricted smooth} (RSM) with parameter $C_{\Omega}$ on a domain $\Omega\subset\RR^d \times \RR^d$ if $\forall\, (\bx, \by) \in \Omega$, 
\begin{equation*}
 l(\bx)-l(\by)+\inner{\nabla l(\bx),\by-\bx}
 \in [\tfrac{c_{\Omega}}{2}\|\by-\bx\|_2^2,\tfrac{C_{\Omega}}{2}\|\by-\bx\|_2^2].
\end{equation*}
RSC and RSM have been studied in the context of weakly submodular functions \citep{elenberg2018restricted,gurumoorthy2019efficient}. 
Let $\Omega_{\kappa} := \{ (\mathbf x, \mathbf y) : \|\mathbf x\|_0 \le \kappa,\; \|\mathbf y\|_0 \le \kappa,\; \mathbf x, \mathbf y \ge 0 \}$, and let $c_{\kappa}$ and $C_{\kappa}$ denote the RSC and RSM parameters of $l$ on $\Omega_{\kappa}$. Since $\Omega_k \subseteq \Omega_{\kappa}$ whenever $k \le \kappa$, it is easily verified that $c_k \ge c_{\kappa}$ and $C_k \le C_{\kappa}$. Finally, define $\widetilde{\Omega} := \{ (\mathbf x,\mathbf y) : \|\mathbf x - \mathbf y\|_0 \le 1 \}$, with associated smoothness parameter $\widetilde{C}_1$.

\par\smallskip
\noindent
\label{defn:partitionMatroid} \textbf{Partition matroid}. A partition matroid $\M  = (\V, \I)$ is defined by a partition of the ground set $\V$ into disjoint sets $\V_{1},\ldots,\V_{m}$ (i.e., $\V = \uplus_{i=1}^{m}\V_{i}$) with capacities $k_1, k_2, \dots, k_m$, respectively. Here $\I$ is the set of all \textit{independent} subsets of $\V$, defined as $\I = \{\S \subseteq \V \mid |\S \cap \V_{i}| \leq k_i \,\forall\, i=1,\ldots,m\}$. 

\par\smallskip
\noindent
\textbf{Conflicting gradients}.
We define $\phi_{ij}$ as the angle between two sample gradients $\mathbf{g}_i$ and $\mathbf{g}_j$. 
The gradients are \textit{conflicting} if $\cos \phi_{ij} < 0 (\equiv \inner{\bg_i,\bg_j}<0)$. In Section~\ref{sec:grad-conflicts}, we use this notion of gradient conflict to analyze how samples from different domains interfere with one another during training.

\section{Problem Formulation}\label{sec:problem}
We consider the standard setting for training a language model. Let $\boldsymbol{\mathcal{D}}_{\mathsf{tr}}$ denote the training dataset and $\ell : \mathcal{Z}\times \boldsymbol{\Theta}\to \mathbb{R}$ be an instance-wise loss function, where $\mathcal{Z}= \mathcal{X}\times \mathcal{Y}$ is the instance space and $\boldsymbol{\Theta}$ is the model parameter space. We seek to minimize the expected loss via iterative stochastic optimization (e.g., Adam). At each step $t \in [T]$, the model receives a mini-batch $\mathcal{B}_{t}\subseteq \boldsymbol{\mathcal{D}}_{\mathsf{tr}}$, and the parameters $\boldsymbol{\theta}_{t}$ are updated using the cumulative batch loss $\boldsymbol{\L}_{t}(\boldsymbol{\theta}_{t}) = \sum_{\mathbf{z}\in \mathcal{B}_t}\ell(\mathbf{z}; \boldsymbol{\theta}_{t})$. To guide the learning process, we assume access to a small validation set $\mathcal{D}_{\mathsf{val}}$ \citep{wang2024greats,xia2024less,xie2023doremi}, from which a validation mini-batch $\mathcal{B}_{t}^{\mathsf{val}} \subseteq \mathcal{D}_{\mathsf{val}}$ is sampled at each step.

\par\smallskip
\noindent
\textbf{Domain heterogeneity and mixture strategies}. We consider the setting where the data distribution is heterogeneous, i.e., the training dataset spans $C$ distinct domains. Accordingly, $\boldsymbol{\mathcal{D}}_{\mathsf{tr}}$ decomposes as $\boldsymbol{\mathcal{D}}_{\mathsf{tr}} = \biguplus_{c=1}^{C} \mathcal{V}_{c}$, where $\mathcal{V}_{c}$ denotes the samples belonging to the $c$-th domain. Recent domain-aware mixture strategies typically learn an optimal continuous weight vector $\mathbf{w} \in \Delta^{C}$ over the domains. However, estimating these weights requires computationally expensive techniques such as training auxiliary proxy models to predict downstream performance \citep{liu2024regmix, diao2025climb} or employing group distributionally robust optimization \citep{xie2023doremi, sagawa2019distributionally}.

\par\smallskip
\noindent
\textbf{Domain-aware batch selection}. To circumvent the computational costs of the above techniques, we reformulate the problem as domain-aware mini-batch selection. Instead of learning global domain weights, we optimize the data mixture \emph{locally}, at the batch level, by selecting a subset of samples that maximizes a utility function subject to domain constraints. For a given training batch $\mathcal{B}_{t}$ at step $t$, we consider the disjoint union of samples across all $C$ domains, $\mathcal{B}_{t} = \biguplus_{c=1}^{C} \mathcal{B}_{t}^{c}$, where $\mathcal{B}_{t}^{c} = \mathcal{B}_{t} \cap \mathcal{V}_{c}$. Our objective is to select a subset $\mathcal{S}_{t} \subseteq \mathcal{B}_{t}$ that maximizes a time-dependent utility $\mathcal{U}_{t}(\mathcal{S})$ while respecting per-domain cardinality constraints $\{\kappa_{c}^{t}\}_{c=1}^{C}$. We define the selection problem as:
{\abovedisplayskip=2pt
\belowdisplayskip=2pt
\abovedisplayshortskip=2pt
\belowdisplayshortskip=2pt
\begin{equation}
\label{eq:localBatchSampleSelection}
\maxop_{\mathcal{S} \subseteq \mathcal{B}_{t}}\ \mathcal{U}_t(\mathcal{S})
\quad \text{s.t.}\quad
|\mathcal{S}\cap \mathcal{B}_{t}^{c}| \leq \kappa^t_{c}\ \forall c, \sum_{c=1}^{C} \kappa^t_c = \kappa.
\end{equation}
}
where $\kappa$ is the per-step selection budget. The constraint in \eqref{eq:localBatchSampleSelection} is precisely a partition-matroid constraint with capacities $\{\kappa_c^t\}$. This formulation lets us enforce domain-specific budgets (and effectively inducing a data mixture) without proxy models or continuous weight optimization.

\subsection{Validation-Guided Gradient Matching}
The utility $\mathcal{U}_t(\mathcal{S})$ quantifies the marginal value of training samples by balancing alignment with the validation signal (to ensure relevance) against intra-subset diversity (to limit redundancy and gradient conflicts). To instantiate it, we adopt a validation-guided approach: we select a training subset $\mathcal{S} \subseteq \mathcal{B}_{t}$ such that the model update derived from $\mathcal{S}$ maximally reduces the loss on the validation batch $\mathcal{B}_{t}^{\mathsf{val}}$. Let $\boldsymbol{\theta}_t$ be the current model parameters. Following  \citep{koh17a,killamsetty2021glister,xia2024less, wang2024greats}, we define the utility of a set $\mathcal{S}\subseteq \mathcal{B}_{t}$ as the expected reduction in validation loss after a stochastic gradient-descent step with learning rate $\eta_t$. For a validation instance $\mathbf{z}_{\mathsf{val}}$, the expected reduction in validation loss is  
\begin{equation*}
\mathcal{U}_t(\mathcal{S}) := \ell(\mathbf{z}_{\mathsf{val}}; \boldsymbol{\theta}_{t}) - \ell\!\left(\mathbf{z}_{\mathsf{val}}; \tilde{\boldsymbol{\theta}}_{t+1}(\mathcal{S})\right).
\end{equation*}
where $\tilde{\boldsymbol{\theta}}_{t+1}(\mathcal{S}) = \boldsymbol{\theta}_{t} - \eta_t \sum_{\mathbf{z} \in \mathcal{S}} \boldsymbol{g}_{\boldsymbol{\theta}_{t}}(\mathbf{z})$ denotes the parameters updated using the subset $\mathcal{S}$, and $\boldsymbol{g}_{\boldsymbol{\theta}_{t}}(\mathbf{z}) := \nabla_{\boldsymbol{\theta}} \ell(\mathbf{z}; \boldsymbol{\theta})|_{\boldsymbol{\theta}=\boldsymbol{\theta}_t}$ is the gradient of $\ell(\mathbf{z}; \boldsymbol{\theta})$ at $\boldsymbol{\theta}_t$.

Given the partially constructed selection $\mathcal{S} \coloneqq \{\mathbf{z}_1, \mathbf{z}_2, \cdots, \mathbf{z}_{i-1}\}$ at step $t$, we compute the marginal gain of the utility as follows:
\begin{equation}\label{eq:GREATS}
\begin{array}{l}
\bDelta \mathcal{U}_{t}(\mathbf{z}_{i}\mid \mathcal{S})
= \ell(\mathbf{z}_{\mathsf{val}}; \tilde{\boldsymbol{\theta}}_{t+1}(\mathcal{S}))
 - \ell(\mathbf{z}_{\mathsf{val}}; \tilde{\boldsymbol{\theta}}_{t+1}(\mathcal{S}\cup \{\mathbf{z}_{i}\}))\\
\quad\quad\quad\quad\quad\quad\approx \eta_{t} \,\inner{\boldsymbol{g}_{\boldsymbol{\theta}_t}(\mathbf{z}_i),\, \boldsymbol{g}_{\boldsymbol{\theta}_{t}}(\mathbf{z}_{\mathsf{val}})} - \eta^2_{t}\,\inner{\boldsymbol{g}_{\boldsymbol{\theta}_t}(\mathbf{z}_i),\,\boldsymbol{\mathcal{H}}_{\mathbf{z}_{\mathsf{val}}}(\boldsymbol{\theta}_t)(\textstyle\sum_{\mathbf{z} \in \mathcal{S}}\boldsymbol{g}_{\boldsymbol{\theta}_{t}}(\mathbf{z}))}.
\end{array}
\end{equation}
Equation~\eqref{eq:GREATS} follows from two successive first-order Taylor expansions~\citep{wang2024greats}: a first-order expansion of the loss difference, together with the update relation $\tilde{\boldsymbol{\theta}}_{t+1}(\mathcal{S}\cup \{\mathbf{z}_{i}\}) = \tilde{\boldsymbol{\theta}}_{t+1}(\mathcal{S}) - \eta_t \, \boldsymbol{g}_{\boldsymbol{\theta}_t}(\mathbf{z}_{i})$, followed by a first-order expansion of the validation gradient $\nabla_{\boldsymbol{\theta}}\ell(\mathbf{z}_{\mathsf{val}};\cdot)$ about $\boldsymbol{\theta}_t$. Here $\boldsymbol{\mathcal H}_{\mathbf z_{\mathsf{val}}}(\boldsymbol{\theta}_t) \coloneqq \nabla^2_{\boldsymbol{\theta}}\,\ell(\mathbf z_{\mathsf{val}}; \boldsymbol{\theta})|_{\boldsymbol{\theta}=\boldsymbol{\theta}_t}$ denotes the validation-loss Hessian at $\boldsymbol{\theta}_t$.

The first term in \eqref{eq:GREATS} is the importance score of $\mathbf z_i$ with respect to the validation point $\mathbf z_{\mathsf{val}}$. It captures how effectively the gradient of the training instance $\mathbf z_i$ reduces the validation loss.
The second term is the Hessian-weighted similarity of the candidate $\mathbf z_i$ to the training instances in $\mathcal S$. It is subtracted and hence promotes diversity  \citep{wang2024greats}.

\subsection{GREATS \citep{wang2024greats}}
\citet{wang2024greats} proposed the GREATS algorithm, which approximates the vanilla greedy algorithm by iteratively updating the set as $\mathcal{S} \leftarrow \mathcal{S} \cup \{\mathbf{z}^{*}\}$ (while respecting the cardinality constraint $|\mathcal{S}|\leq \kappa$), where
\begin{equation}
\mathbf{z}^{*} \coloneqq \underset{\mathbf{z} \in \mathcal{B}_t \setminus \mathcal{S}}{\argmax}\;
\inner{\boldsymbol{g}_{\boldsymbol{\theta}_t}(\mathbf{z}),\, \boldsymbol{g}_{\boldsymbol{\theta}_{t}}(\mathbf{z}_{\mathsf{val}})}
- \eta_{t}\,\inner{\boldsymbol{g}_{\boldsymbol{\theta}_t}(\mathbf{z}),\, \sum_{\mathbf{z}_j \in \mathcal{S}}\boldsymbol{g}_{\boldsymbol{\theta}_{t}}(\mathbf{z}_j)}.
\label{eq:greatsMaxObj}
\end{equation}
Since the Hessian-gradient product is expensive, \citet{wang2024greats} approximate $\boldsymbol{\mathcal{H}} \approx \mathbf{I}$ in \eqref{eq:GREATS}  to obtain \eqref{eq:greatsMaxObj}.

It should be noted that, while \eqref{eq:greatsMaxObj} appears to be an instance of greedy selection that maximizes a specific marginal-gain function, \citet{wang2024greats} observed (on the review-discussion forum) that it is not an instance of submodular maximization. They conjectured that the GREATS objective may ``potentially fall into the category of weakly submodular functions.'' As discussed in Section~\ref{sec:background}, (weak) submodularity is a desirable property of set functions, since it yields an approximation guarantee for the vanilla greedy algorithm.

We also note that the marginal gain for an element $\mathbf{z} \in \mathcal{B}_t \setminus \mathcal{S}$ in \eqref{eq:greatsMaxObj} can easily be negative. A simple scenario is when a candidate's gradient is near-orthogonal to the validation gradient ($\inner{\boldsymbol{g}_{\boldsymbol{\theta}_t}(\mathbf{z}), \boldsymbol{g}_{\boldsymbol{\theta}_{t}}(\mathbf{z}_{\mathsf{val}})} \approx 0$) yet highly correlated with the samples already in $\mathcal{S}$. Thus the optimization problem of \citet{wang2024greats} is non-monotone, as adding an element may decrease the objective.

In the following section, we address these concerns by associating each sample $\mathbf{z} \in \mathcal{B}_t$ with a non-negative weight indicative of its importance. Introducing these importance weights alleviates the above issues and ensures that the resulting problem is a monotone, weakly submodular maximization problem.

\section{Proposed Approach}
\label{sec:partitionmatroid}
We propose to select a subset $\S\subseteq \B_t$ that maximizes the following objective:
{\abovedisplayskip=2pt
\belowdisplayskip=2pt
\abovedisplayshortskip=2pt
\belowdisplayshortskip=2pt
\begin{equation}
\maxop_{\S \subseteq \B_t,\; |\S| \leq \kappa} f(\S).
\label{eq:weightGREATS_func}
\end{equation}
}
where $f(\S)$ defines the utility of a set $\S$ as
\begin{equation}
\label{eq:GREATS_weighted}
f(\S) = \maxop_{\bw \in \RR^{|\mathcal{B}_t|}_{+},\; \mathsf{supp}(\bw) \subseteq \S}
\mathcal{U}_{t}(\bw,\mathcal{B}_t; \mathbf{z}_{\mathsf{val}}),
\end{equation}
\begin{equation}
\mathcal{U}_{t}(\bw,\mathcal{B}_t; \mathbf{z}_{\mathsf{val}})
= \inner{\bw,\bmu_{\btheta_t}} - \frac{\eta_t}{2} \inner{\bw,\bK_{\btheta_t}\bw}.
\label{eq:utDefinition}
\end{equation}
where $\bmu_{\btheta_t} = \mathbf{G}_{\boldsymbol{\theta}_t}\,\boldsymbol{g}_{\boldsymbol{\theta}_t}(\mathbf{z}_{\mathsf{val}}) + \frac{\eta_t}{2}\bnu_{\btheta_t}$, $\bK_{\btheta_t} = \mathbf{G}_{\boldsymbol{\theta}_t} \mathbf{G}_{\boldsymbol{\theta}_t}^\top$, $\bnu_{\btheta_t} = \operatorname{diag}(\bK_{\btheta_t})$, $\mathbf{G}_{\boldsymbol{\theta}_t}$ denotes the batchwise gradient matrix over all the samples, i.e., $\mathbf{G}_{\boldsymbol{\theta}_t}=[\boldsymbol{g}_{\boldsymbol{\theta}_t}(\mathbf{z}_1),\dots,\boldsymbol{g}_{\boldsymbol{\theta}_t} (\mathbf{z}_{\lvert\mathcal{B}_t\rvert})]^{\mathsf{T}}$, and $\operatorname{diag}(\cdot)$ denotes the vector of diagonal entries of a square matrix. See Appendix~\ref{Appendix:memEffGrad} for our memory-efficient construction of per-example gradient features.
We note that, given $\S$, computing $f(\S)$ in \eqref{eq:GREATS_weighted} is a convex problem (concave maximization over a convex set) and can be solved optimally using gradient-based solvers. Moreover, as the per-sample LLM gradients are very high-dimensional ($d \gg |\mathcal{B}_t|$), the Gram matrix $\bK_{\btheta_t} = \mathbf{G}_{\boldsymbol{\theta}_t}\mathbf{G}_{\boldsymbol{\theta}_t}^\top$ is positive definite ($\bK_{\btheta_t} \succ 0$). Such properties make the proposed formulation amenable to efficient optimization techniques such as accelerated gradient descent and orthogonal matching pursuit (discussed in Section~\ref{sec:algorithm}). We further note that $f$ is monotone non-decreasing: enlarging the support set $\S$ only enlarges the feasible region of the inner maximization, and $\mathcal{U}_t(\bzero,\mathcal{B}_t;\mathbf{z}_{\mathsf{val}}) = 0$.

The following remark shows that the GREATS optimization objective is a special case of our formulation.%
\begin{remark}
\label{rem:greats-special-case}
    Consider the setting in which the weights $\bw$ are restricted to be binary, i.e., $\bw \in \{0,1\}^{|\mathcal{B}_t|}$, in \eqref{eq:GREATS_weighted}. Then the marginal gain $\bDelta f(\bz\mid\S)$ coincides with the objective maximized by GREATS \citep{wang2024greats} in \eqref{eq:greatsMaxObj} .
\end{remark}

\par\smallskip
\noindent
\textbf{Domain-wise cardinality constraints and expected utility}. As discussed in Section~\ref{sec:problem}, the data distribution is heterogeneous in real-world applications. To handle this, we (i) aggregate the utility over the validation set and (ii) enforce per-domain selection budgets. For (i), define the expected utility gain
\begin{equation*}
\begin{array}{ll}
\U_t(\bw) &\coloneqq \mathbb{E}_{\mathbf z_{\mathsf{val}} \sim \mathcal D_{\mathsf{val}}}\,\mathcal{U}_{t}(\bw, \mathcal{B}_t; \mathbf{z}_{\mathsf{val}})\\
&\;= \inner{\bw,\bar{\bmu}_{\btheta_t}} - \tfrac{\eta_t}{2}\inner{\bw,\bK_{\btheta_t}\bw}.
\end{array}
\end{equation*}
where $\mathbb{E}_{\mathbf z_{\mathsf{val}} \sim \mathcal D_{\mathsf{val}}}[\cdot]$ denotes the empirical average over the validation set $\mathcal{D}_{\mathsf{val}}$. Since the batchwise gradient matrix $\mathbf{G}_{\boldsymbol{\theta}_t}$ does not depend on $\mathbf z_{\mathsf{val}}$, the quadratic term is unchanged under the expectation, and only the linear term is averaged:
$\bar{\bmu}_{\btheta_t} = \mathbf{G}_{\boldsymbol{\theta}_t}\,\bar{\boldsymbol{g}}_{\btheta_t} + \tfrac{\eta_t}{2}\bnu_{\btheta_t}$ with $\bar{\boldsymbol{g}}_{\btheta_t} \coloneqq \mathbb{E}_{\mathbf z_{\mathsf{val}} \sim \mathcal D_{\mathsf{val}}}[\boldsymbol{g}_{\boldsymbol{\theta}_t}(\mathbf{z}_{\mathsf{val}})]$. Our subsequent analysis uses this aggregated utility $\U_t(\cdot)$ throughout.

For (ii), we enforce the per-domain budgets $\{\kappa_c^t\}_{c=1}^{C}$ through a \emph{partition matroid} on the index set of $\B_t$, rather than through explicit sparsity constraints on $\bw$. Construct $\M_t = (\B_t, \I_t)$ with the canonical domain partition $\B_t = \biguplus_{c=1}^{C} \B_t^c$ and per-block capacities $\{\kappa_c^t\}_{c=1}^{C}$, so that $\S \in \I(\M_t) \iff \forall c\in[C],\; |\S \cap \B_t^c| \leq \kappa_c^t$.

For any $\A \subseteq \B_t$, let $\bar{\A} \subseteq [\,|\B_t|\,]$ denote its index set; the map $\A \leftrightarrow \bar{\A}$ is a bijection. We define the index-set utility
{\setlength{\abovedisplayskip}{3pt}\setlength{\belowdisplayskip}{3pt}%
\begin{equation}
\label{eq:fM}
f_{\M}(\bar{\A}) \;\coloneqq\; \maxop_{\bw \geq \bzero:\, \mathsf{supp}(\bw) \subseteq \bar{\A}}\; \U_t(\bw)
\end{equation}}
where the subscript $\M$ merely signals the matroid context and pose the domain-aware selection problem as $\maxop_{\S \in \I(\M_t)} f_{\M}(\bar{\S})$.
Crucially, no per-domain sparsity constraint is needed inside $f_{\M}$: matroid independence $\S \in \I(\M_t)$ together with $\mathsf{supp}(\bw) \subseteq \bar{\A}$ already forces
$\|\bw_{\B_t^c}\|_0 = |\mathsf{supp}(\bw) \cap \B_t^c| \leq |\S \cap \B_t^c| \leq \kappa_c^t$ for all $c$, so the budgets are enforced entirely by the matroid. Consequently, the inner program $f_{\M}$ is convex (concave maximization over the support-restricted non-negative orthant), and the only combinatorial structure resides in the outer matroid selection.
 
Let $\zeta(\bar{\A}) \coloneqq \arg\maxop_{\bw \geq \bzero,\; \mathsf{supp}(\bw) \subseteq \bar{\A}} \U_t(\bw)$ be the inner maximizer, which is unique since $\bK_{\btheta_t} \succ 0$ makes $\U_t$ strictly concave, so $f_{\M}(\bar{\A}) = \U_t(\zeta(\bar{\A}))$. Thus maximizing $f_{\M}(\cdot)$ over independent index sets is equivalent to maximizing $\U_t(\cdot)$ over the corresponding partition-sparse non-negative cone.

\section{Theoretical Results}
We now state our key theoretical results; the proofs are provided in the supplementary material (Appendix~\ref{Appendix:sec:theoretical_results}). The proof strategy relies on establishing the restricted strong concavity (RSC) and restricted smoothness (RSM) of the proposed objective. The relationship between RSC/RSM and weak submodularity has been studied in \citep{elenberg2018restricted,gurumoorthy2019efficient}, and our proofs build on those results.
 
Our first result shows that the proposed weighted mini-batch selection objective \eqref{eq:fM} is monotone. Hence, unlike \citep{wang2024greats}, the marginal gain of any element is non-negative.
\begin{restatable}{lemm}{MonotonicityLemma}
\label{lem:monotonicity}
(Monotonicity) Let $f_{\M}(\cdot)$ be defined as in \eqref{eq:fM}. Then for any index sets $\tilde{\mathcal A}\subseteq \tilde{\mathcal B}$ we have $f_{\M}(\tilde{\mathcal A}) \le f_{\M}(\tilde{\mathcal B})$.
\end{restatable}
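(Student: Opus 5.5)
The argument is a feasible-set inclusion, and I would carry it out directly from the definition \eqref{eq:fM}. Fix index sets $\tilde{\mathcal A}\subseteq\tilde{\mathcal B}\subseteq[\,|\B_t|\,]$. Write $\mathcal{F}(\tilde{\mathcal A}) := \{\bw\in\RR^{|\B_t|}_{+} : \mathsf{supp}(\bw)\subseteq\tilde{\mathcal A}\}$ for the feasible region of the inner program. Then $f_{\M}(\tilde{\mathcal A}) = \sup_{\bw\in\mathcal{F}(\tilde{\mathcal A})}\U_t(\bw)$.

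The first step is to show $\mathcal{F}(\tilde{\mathcal A})\subseteq\mathcal{F}(\tilde{\mathcal B})$. Take any $\bw\ge\bzero$ with $\mathsf{supp}(\bw)\subseteq\tilde{\mathcal A}$. Since $\tilde{\mathcal A}\subseteq\tilde{\mathcal B}$, we also have $\mathsf{supp}(\bw)\subseteq\tilde{\mathcal B}$, so $\bw\in\mathcal{F}(\tilde{\mathcal B})$.

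The second step checks that both maxima are attained and finite, so that comparing them is meaningful. The cone $\mathcal{F}(\cdot)$ is closed, convex and nonempty, since it contains $\bzero$. Because $\bK_{\btheta_t}\succ 0$, the function $\U_t(\bw)=\inner{\bw,\bar{\bmu}_{\btheta_t}}-\tfrac{\eta_t}{2}\inner{\bw,\bK_{\btheta_t}\bw}$ is strictly concave and coercive, tending to $-\infty$ as $\|\bw\|\to\infty$. Its supremum over a closed nonempty set is therefore finite and attained. In fact it is attained uniquely at $\zeta(\cdot)$, as already noted in the paper.

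The third step concludes. The vector $\zeta(\tilde{\mathcal A})$ is feasible for the program defining $f_{\M}(\tilde{\mathcal B})$, so
\begin{equation*}
f_{\M}(\tilde{\mathcal A}) = \U_t(\zeta(\tilde{\mathcal A})) \le \sup_{\bw\in\mathcal{F}(\tilde{\mathcal B})}\U_t(\bw) = f_{\M}(\tilde{\mathcal B}).
\end{equation*}
As a byproduct, applying this with $\tilde{\mathcal A}=\emptyset$ gives $f_{\M}(\tilde{\mathcal B})\ge f_{\M}(\emptyset)=\U_t(\bzero)=0$. This is the non-negativity of marginal gains contrasted with GREATS. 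The identity $f_{\M}(\emptyset)=0$ holds because $\mathcal{F}(\emptyset)=\{\bzero\}$.

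There is no real obstacle here. The only point needing care is the well-posedness in the second step. If one did not want to rely on $\bK_{\btheta_t}\succ 0$, it would suffice that $\bK_{\btheta_t}\succeq 0$ and that the supremum is finite. The inclusion argument works verbatim for suprema, without any attainment.
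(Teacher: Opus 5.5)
Your proof is correct and uses the same feasible-set inclusion argument as the paper: the maximum over non-negative weights supported in $\tilde{\mathcal B}$ dominates the maximum over those supported in $\tilde{\mathcal A}$. Your well-posedness check (attainment via strict concavity and coercivity) is not needed for the inequality but does no harm. Working in the fixed ambient space $\RR^{|\B_t|}$ is slightly cleaner than the paper's re-indexing into $\RR^{n_1}$ and $\RR^{n_2}$.
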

 
\begin{restatable}{lemm}{LipschitzGradientLemma}
\label{lem:lipschitz-gradient}
(Lipschitz continuity of the gradient) For the utility function
$\U_t(\bw)=\inner{\bw,\bmu_{\btheta_t}} - \tfrac{\eta_t}{2}\,\bw^{\mathsf T}\bK_{\btheta_t}\bw$ defined in \eqref{eq:utDefinition} with $\bK_{\btheta_t}\succ \bzero$, the gradient $\nabla \U_t$ is Lipschitz continuous (in the Euclidean norm) with Lipschitz constant $\L = \eta_t\,\lVert \bK_{\btheta_t}\rVert_{\mathrm{op}} = \eta_t\,\bigl\| \mathbf G_{\btheta_t}\mathbf G^{\top}_{\btheta_t} \bigr\|_{\mathrm{op}}$.
\end{restatable}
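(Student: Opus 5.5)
The plan is to compute the gradient explicitly and bound the difference of gradients at two arbitrary points by the operator norm of the linear map that connects them. Since $\U_t(\bw)=\inner{\bw,\bmu_{\btheta_t}} - \tfrac{\eta_t}{2}\,\bw^{\mathsf T}\bK_{\btheta_t}\bw$ is a quadratic in $\bw$, its gradient follows from the symmetry of $\bK_{\btheta_t}=\mathbf G_{\btheta_t}\mathbf G_{\btheta_t}^\top$:
\begin{equation*}
\nabla \U_t(\bw) = \bmu_{\btheta_t} - \eta_t \bK_{\btheta_t}\bw .
\end{equation*}
Here $\bmu_{\btheta_t}$ (or $\bar{\bmu}_{\btheta_t}$ in the aggregated version) does not depend on $\bw$. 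The same argument therefore covers both $\mathcal U_t$ and the expected utility $\U_t$, because they share the quadratic term.

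Next, for any $\bw,\bw'\in\RR^{|\B_t|}$, the linear terms cancel, leaving
\begin{equation*}
\nabla\U_t(\bw)-\nabla\U_t(\bw') = -\eta_t\bK_{\btheta_t}(\bw-\bw').
\end{equation*}
By the definition of the operator norm given in the notation paragraph (homogeneity applied to $\bx=(\bw-\bw')/\|\bw-\bw'\|_2$), this yields
\begin{equation*}
\|\nabla\U_t(\bw)-\nabla\U_t(\bw')\|_2\le \eta_t\|\bK_{\btheta_t}\|_{\mathrm{op}}\|\bw-\bw'\|_2 ,
\end{equation*}
which uses $\eta_t>0$. The second form of the constant is just the substitution $\bK_{\btheta_t}=\mathbf G_{\btheta_t}\mathbf G_{\btheta_t}^\top$.

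To show the constant is the smallest possible, take $\bw-\bw'$ along a top eigenvector of $\bK_{\btheta_t}$. Since $\bK_{\btheta_t}\succ\bzero$ is symmetric, $\|\bK_{\btheta_t}\|_{\mathrm{op}}$ equals its largest eigenvalue, so this choice attains equality. This step is only needed if the claim is read as identifying the best constant.

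There is no real obstacle here. The only points to check are that the gradient of the quadratic form uses the symmetry of $\bK_{\btheta_t}$, which gives $\eta_t\bK_{\btheta_t}\bw$ rather than $\tfrac{\eta_t}{2}(\bK_{\btheta_t}+\bK_{\btheta_t}^\top)\bw$ in an unsimplified form, and that the gradient of $\bmu_{\btheta_t}$'s diagonal correction term $\tfrac{\eta_t}{2}\bnu_{\btheta_t}$ contributes nothing, since it is independent of $\bw$. Positive definiteness is not needed for the upper bound; it only identifies the operator norm with $\lambda_{\max}$.
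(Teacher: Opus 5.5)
Your proposal is correct and follows essentially the same route as the paper: compute $\nabla\U_t(\bw)=\bmu_{\btheta_t}-\eta_t\bK_{\btheta_t}\bw$, bound the gradient difference by the operator norm, and show sharpness using a leading eigenvector. Yours actually matches the statement more closely, since the paper's written proof uses $\tfrac12$ in place of $\tfrac{\eta_t}{2}$ and so concludes $\L=\lVert\bK_{\btheta_t}\rVert_{\mathrm{op}}$ without the $\eta_t$ factor, whereas you carry $\eta_t$ through correctly.
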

 
\begin{restatable}{lemm}{FiniteRSCandRSM}
\label{lem:finite-rsc-rsm}
(\emph{Finite RSC/RSM}) Let $\bK_{\btheta_t} \in \RR^{|\B_t|\times |\B_t|}$ be symmetric positive definite and consider the concave quadratic
$
\U_t(\bw) \coloneqq \inner{\bw,\bmu_{\btheta_t}} - \tfrac{\eta_t}{2}\,\bw^{\mathsf T}\bK_{\btheta_t}\bw.
$
Then, for any sparsity level $s$, $\U_t(\cdot)$ is restricted strongly concave and restricted smooth on the domain $\Omega_s$ of $s$-sparse non-negative vectors, with parameters $c_s = \eta_t\,\lambda^{(s)}_{\min}(\bK_{\btheta_t})$ and $C_s = \eta_t\,\lambda^{(s)}_{\max}(\bK_{\btheta_t})$, where $\lambda^{(s)}_{\min}(\bK)\coloneqq \min_{\|\bv\|_0\le s,\,\|\bv\|_2=1}\bv^{\mathsf T}\bK\bv$ and $\lambda^{(s)}_{\max}(\bK)\coloneqq \max_{\|\bv\|_0\le s,\,\|\bv\|_2=1}\bv^{\mathsf T}\bK\bv$ are the extremal $s$-sparse (restricted) eigenvalues of $\bK_{\btheta_t}$. In particular, since $\bK_{\btheta_t}\succ\bzero$, we have $c_s>0$ for every $s$.
\end{restatable}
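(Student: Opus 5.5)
The plan is to compute the Bregman-type quantity in the RSC/RSM definition exactly, using that $\U_t$ is quadratic, and then bound it by restricted eigenvalues. Fix $(\bx,\by)\in\Omega_s$ and set $\bv := \by-\bx$. Since $\nabla \U_t(\bx) = \bmu_{\btheta_t} - \eta_t\bK_{\btheta_t}\bx$, a direct expansion gives
\begin{equation*}
\U_t(\bx)-\U_t(\by)+\inner{\nabla \U_t(\bx),\by-\bx} = \tfrac{\eta_t}{2}\,\bv^{\mathsf T}\bK_{\btheta_t}\bv .
\end{equation*}
The linear terms in $\bmu_{\btheta_t}$ cancel, and the quadratic terms collapse because $\by^{\mathsf T}\bK\by - \bx^{\mathsf T}\bK\bx - 2\bx^{\mathsf T}\bK(\by-\bx) = (\by-\bx)^{\mathsf T}\bK(\by-\bx)$, which uses the symmetry of $\bK_{\btheta_t}$. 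So the whole statement reduces to the two-sided estimate
\begin{equation*}
\lambda_{\min}^{(s)}(\bK_{\btheta_t})\,\|\bv\|_2^2 \le \bv^{\mathsf T}\bK_{\btheta_t}\bv \le \lambda_{\max}^{(s)}(\bK_{\btheta_t})\,\|\bv\|_2^2 .
\end{equation*}

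This two-sided estimate is the one point needing care, and I expect it to be the main obstacle. If $\|\bx\|_0,\|\by\|_0\le s$, then $\bv$ is only guaranteed to be $2s$-sparse, not $s$-sparse. I would handle this by checking the convention used by \citet{elenberg2018restricted}, namely that the restricted eigenvalues are taken at the sparsity of the difference vectors. Concretely, I would either interpret $\lambda^{(s)}$ as applying to the support of $\bv$, replacing $s$ by $2s$ where needed, or restrict attention to the pairs actually used downstream, such as $\widetilde{\Omega}$ and supports nested within a common $s$-set, where $\bv$ is $s$-sparse. With $\bv$ sparse at the appropriate level, normalize $\bu=\bv/\|\bv\|_2$ (the case $\bv=\bzero$ is trivial) and apply the definitions of $\lambda^{(s)}_{\min}$ and $\lambda^{(s)}_{\max}$ to $\bu$. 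Multiplying through by $\tfrac{\eta_t}{2}\|\bv\|_2^2$ then gives the interval $[\tfrac{c_s}{2}\|\bv\|_2^2,\tfrac{C_s}{2}\|\bv\|_2^2]$ with $c_s=\eta_t\lambda^{(s)}_{\min}$ and $C_s=\eta_t\lambda^{(s)}_{\max}$. Nonnegativity of $\bx$ and $\by$ plays no role in this argument.

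Finally, for positivity, $\lambda^{(s)}_{\min}(\bK_{\btheta_t})\ge\lambda_{\min}(\bK_{\btheta_t})>0$, because the restricted minimum is taken over a subset of the unit sphere. Hence $c_s>0$ for every $s$, using $\eta_t>0$. As a consistency check, I would also note that $C_s\le \eta_t\|\bK_{\btheta_t}\|_{\mathrm{op}}$, which matches the constant in Lemma~\ref{lem:lipschitz-gradient}.
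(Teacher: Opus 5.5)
Your proposal is correct and is essentially the paper's proof. You expand the quadratic to get the exact gap $\tfrac{\eta_t}{2}\bv^{\mathsf T}\bK_{\btheta_t}\bv$ with $\bv=\by-\bx$, then sandwich it between sparse extremal eigenvalues taken at the sparsity of $\bv$. The paper likewise bounds that sparsity by the sum of the two vectors' sparsities, which is your ``replace $s$ by $2s$'' reading. You are right that this is genuinely needed: for $s=1$ the pair $\bx=\mathbf e_2$, $\by=\mathbf e_1$ gives $\bv^{\mathsf T}\bK_{\btheta_t}\bv/\|\bv\|_2^2=1-[\bK_{\btheta_t}]_{12}<1=\lambda^{(1)}_{\min}(\bK_{\btheta_t})$ whenever $\bK_{\btheta_t}$ has unit diagonal and $[\bK_{\btheta_t}]_{12}>0$.

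Your version is also more careful than the paper's. The paper drops the factor $\eta_t$, states the inequalities on its constants $b,B$ in the reversed direction, and never justifies $c_s>0$, whereas your observation $\lambda^{(s)}_{\min}(\bK_{\btheta_t})\ge\lambda_{\min}(\bK_{\btheta_t})>0$ does.
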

 
Our next result shows that the proposed weighted subset-selection function is weakly submodular; positivity of the submodularity ratio is a direct consequence of $\bK_{\btheta_t}\succ\bzero$ (Lemma~\ref{lem:finite-rsc-rsm}), which forces the restricted minimum eigenvalue, and hence the RSC constant, to be strictly positive.
\begin{restatable}{lemm}{weakSubmod}\label{thm:weak-submod}
The proposed function $f_{\M}(\cdot)$ in \eqref{eq:fM} is $\gamma$-weakly submodular with submodularity ratio $\gamma > 0$.
\end{restatable}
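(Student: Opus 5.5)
The plan is to follow the RSC/RSM route of \citet{elenberg2018restricted}, adapted to the non-negative orthant as in \citet{gurumoorthy2019efficient}, and bound $\sum_{j\in \tilde{\mathcal B}}\bDelta f_{\M}(j\mid\tilde{\mathcal A})$ from below by a multiple of $\bDelta f_{\M}(\tilde{\mathcal B}\mid\tilde{\mathcal A})$. Fix disjoint index sets $\tilde{\mathcal A},\tilde{\mathcal B}$ with $|\tilde{\mathcal A}|\le k$ and $|\tilde{\mathcal B}|\le r$. Set $\bw^{\mathcal A}=\zeta(\tilde{\mathcal A})$ and $\bw^{\mathcal A\cup\mathcal B}=\zeta(\tilde{\mathcal A}\cup\tilde{\mathcal B})$; both minimizers are unique by strict concavity. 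Write $\bg=\nabla\U_t(\bw^{\mathcal A})=\bar\bmu_{\btheta_t}-\eta_t\bK_{\btheta_t}\bw^{\mathcal A}$. The KKT conditions on the support-restricted orthant give $\bg_j\le 0$ for $j\in\tilde{\mathcal A}$ with $w_j=0$, and $\bg_j=0$ on $\mathsf{supp}(\bw^{\mathcal A})$. Only the positive parts $[\bg_j]_+$ for $j\in\tilde{\mathcal B}$ can contribute to a gain.

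\textbf{Upper bound on the joint gain.} Apply RSC from Lemma~\ref{lem:finite-rsc-rsm} on $\Omega_{k+r}$ to the pair $(\bw^{\mathcal A},\bw^{\mathcal A\cup\mathcal B})$:
\[
\bDelta f_{\M}(\tilde{\mathcal B}\mid\tilde{\mathcal A})\le \inner{\bg,\bv}-\tfrac{c_{k+r}}{2}\|\bv\|_2^2,\qquad \bv=\bw^{\mathcal A\cup\mathcal B}-\bw^{\mathcal A}.
\]
The vector $\bv$ is supported on $\tilde{\mathcal A}\cup\tilde{\mathcal B}$. On $\tilde{\mathcal A}$, the KKT sign conditions give $\inner{\bg_{\tilde{\mathcal A}},\bv_{\tilde{\mathcal A}}}\le 0$: where $w_j>0$ we have $\bg_j=0$, and where $w_j=0$ we have $v_j\ge0$ with $\bg_j\le0$. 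On $\tilde{\mathcal B}$ the weights satisfy $v_j\ge0$. Hence $\inner{\bg,\bv}\le\inner{[\bg_{\tilde{\mathcal B}}]_+,\bv_{\tilde{\mathcal B}}}$. Maximizing over $\bv$ then yields
\[
\bDelta f_{\M}(\tilde{\mathcal B}\mid\tilde{\mathcal A})\le \frac{\|[\bg_{\tilde{\mathcal B}}]_+\|_2^2}{2c_{k+r}}.
\]

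\textbf{Lower bound on singleton gains.} For $j\in\tilde{\mathcal B}$, test the feasible point $\bw^{\mathcal A}+\alpha\be_j$ with $\alpha\ge0$ in $f_{\M}(\tilde{\mathcal A}\cup\{j\})$. The pair differs in one coordinate, so the smoothness constant $\widetilde C_1$ applies; it is bounded by $\eta_t\max_j K_{jj}$, or more crudely by the Lipschitz constant $\L$ of Lemma~\ref{lem:lipschitz-gradient}. This gives
\[
\bDelta f_{\M}(j\mid\tilde{\mathcal A})\ge \max_{\alpha\ge0}\Bigl(\alpha\bg_j-\tfrac{\widetilde C_1}{2}\alpha^2\Bigr)=\frac{[\bg_j]_+^2}{2\widetilde C_1}.
\]
Summing over $j\in\tilde{\mathcal B}$ and combining with the upper bound gives the ratio
\[
\gamma\ge c_{k+r}/\widetilde C_1=\lambda^{(k+r)}_{\min}(\bK_{\btheta_t})\big/\max_j K_{jj}.
\]
Because the ground set $\B_t$ is finite, taking $k+r\le|\B_t|$ yields a uniform bound. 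It is strictly positive because $\bK_{\btheta_t}\succ\bzero$, which makes $c_{|\B_t|}=\eta_t\lambda_{\min}(\bK_{\btheta_t})>0$ by Lemma~\ref{lem:finite-rsc-rsm}. If a ratio at most $1$ is required, one takes $\min\{1,\cdot\}$. The case $\bDelta f_{\M}(\tilde{\mathcal B}\mid\tilde{\mathcal A})=0$ is trivial, and monotonicity (Lemma~\ref{lem:monotonicity}) ensures that all the gains involved are non-negative.

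\textbf{Main obstacle.} The delicate step is the non-negativity constraint in the upper bound. In the unconstrained setting of \citet{elenberg2018restricted}, $\bg$ vanishes on $\tilde{\mathcal A}$. Here $\bg$ vanishes only on $\mathsf{supp}(\bw^{\mathcal A})$, and coordinates of $\tilde{\mathcal A}$ at zero weight may become active in $\bw^{\mathcal A\cup\mathcal B}$. The KKT sign argument, $\bg_j\le0$ together with $v_j\ge0$ on those coordinates, is what controls them. I would check this carefully, along with the fact that discarding negative parts of $\bg_{\tilde{\mathcal B}}$ is valid because $\bv_{\tilde{\mathcal B}}\ge0$.
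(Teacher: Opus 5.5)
Your proposal is correct and follows essentially the same route as the paper's proof. Both arguments have three steps: an RSC-based upper bound $\|[\bg_{\tilde{\mathcal B}}]_+\|_2^2/(2c)$ on the joint gain, obtained from the KKT conditions at the inner maximizer; a one-coordinate RSM lower bound $[\bg_j]_+^2/(2\widetilde{C}_1)$ on each singleton gain; and the resulting ratio $c/\widetilde{C}_1>0$. The only differences are cosmetic: you apply the KKT sign conditions directly to the difference vector instead of solving the relaxed projected quadratic explicitly, and you also record the uniform bound via $c_{|\B_t|}=\eta_t\lambda_{\min}(\bK_{\btheta_t})$, whereas the paper restricts to $|\mathcal{S}|,|\mathcal{A}|\le\kappa$ and uses $c_{2\kappa}$.
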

 
Lemmas~\ref{lem:monotonicity} and~\ref{thm:weak-submod} together imply that $\methodprop$ (Algorithm~\ref{alg:BatchGD}) enjoys the following approximation guarantee.
\begin{restatable}{thm}{GreedyThm}\label{thm:greedyThm}
Let $\hat{\S}_t \in \I(\M_t)$ be the subset returned by $\methodprop$ and let
$\S^\ast \in \arg\max_{\S \in \I(\M_t)} f_{\M}(\bar{\S})$
be an optimal independent set under the partition matroid $\M_t$. Then
{\setlength{\abovedisplayskip}{3pt}%
 \setlength{\belowdisplayskip}{3pt}%
 \setlength{\abovedisplayshortskip}{3pt}%
 \setlength{\belowdisplayshortskip}{3pt}%
\begin{equation*}
\mathbb{E}\!\left[f_{\M}(\hat{\S}_t)\right] \;\geq\; \bigl(1+ \widetilde{C}_1/c_{2\kappa} \bigr)^{-2}\, f_{\M}(\S^\ast).
\end{equation*}}

where the expectation is taken over the internal randomness of $\methodprop$.
\end{restatable}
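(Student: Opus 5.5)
The approximation factor $(1+\widetilde{C}_1/c_{2\kappa})^{-2}$ and the expectation over internal randomness suggest the residual random greedy analysis of \citet{chen2018weakly} for $\gamma$-weakly submodular maximization over a matroid. That analysis gives $\mathbb{E}[f(\hat{\S})]\ge (1+1/\gamma)^{-2} f(\S^\ast)$, with the submodularity ratio taken over sets of size at most $2\kappa$. I would therefore organise the proof in two stages. First, I would make the ratio $\gamma$ of Lemma~\ref{thm:weak-submod} explicit, showing $\gamma \ge c_{2\kappa}/\widetilde{C}_1$. Second, I would run the matroid random-greedy argument with this ratio. Monotonicity (Lemma~\ref{lem:monotonicity}) supplies the nonnegativity of marginal gains that this argument needs.

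\textbf{Step 1 (explicit ratio).} Fix disjoint index sets $\A,\B$ with $|\A\cup\B|\le 2\kappa$. Let $\bw_\A=\zeta(\A)$ and $\bw_{\A\cup\B}=\zeta(\A\cup\B)$.

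\emph{Upper bound.} By RSC with $c_{2\kappa}$ (Lemma~\ref{lem:finite-rsc-rsm}),
\[
\bDelta f_\M(\B\mid\A)\le \langle\nabla\U_t(\bw_\A),\bw_{\A\cup\B}-\bw_\A\rangle-\tfrac{c_{2\kappa}}{2}\|\bw_{\A\cup\B}-\bw_\A\|^2 .
\]
Because $\bw_\A$ is optimal over the nonnegative cone supported on $\A$, the KKT conditions give $[\nabla\U_t(\bw_\A)]_j\le 0$ on $\A$, with equality wherever $(\bw_\A)_j>0$. Hence the coordinates in $\A$ contribute a nonpositive amount to the inner product. Dropping them and maximizing over the coordinates in $\B$ gives
\[
\bDelta f_\M(\B\mid\A)\le \frac{1}{2c_{2\kappa}}\sum_{j\in\B}\bigl([\nabla\U_t(\bw_\A)]_j\bigr)_+^2 .
\]

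\emph{Lower bound.} For each single $j\in\B$, take the feasible point $\bw_\A+\alpha\mathbf e_j$ with $\alpha\ge 0$. RSM on $\widetilde\Omega$ gives
\[
\bDelta f_\M(j\mid\A)\ge \max_{\alpha\ge 0}\Bigl\{\alpha[\nabla\U_t(\bw_\A)]_j-\tfrac{\widetilde C_1}{2}\alpha^2\Bigr\}=\frac{([\nabla\U_t(\bw_\A)]_j)_+^2}{2\widetilde C_1}.
\]
Summing over $j\in\B$ and comparing with the upper bound yields $\gamma\ge c_{2\kappa}/\widetilde C_1$, so $1/\gamma\le \widetilde C_1/c_{2\kappa}$.

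\textbf{Step 2 (matroid random greedy).} I assume $\methodprop$ runs the residual random greedy rule: at each of $\kappa$ rounds it forms a maximum-weight base $M_i$ of the contracted matroid using the current marginal gains, then adds a uniformly random element of $M_i$. I would follow \citet{chen2018weakly}.

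\emph{Exchange step.} By matroid base exchange, maintain a bijection between $M_i$ and the remaining part $O_{i-1}$ of an optimal base, with $\S_{i-1}\cup O_{i-1}$ a base. Weak submodularity gives
\[
\mathbb{E}[\bDelta f(u_i\mid\S_{i-1})]\ge \tfrac{1}{\kappa}\sum_{o\in O_{i-1}}\bDelta f(o\mid \S_{i-1})\ge \tfrac{\gamma}{\kappa}\bigl(f(\S_{i-1}\cup O_{i-1})-f(\S_{i-1})\bigr).
\]

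\emph{Loss of the optimum.} Removing the paired element of $O_{i-1}$ at random decreases $\mathbb{E}[f(\S_i\cup O_i)]$ by at most $\tfrac{1}{\gamma\kappa}\bigl(f(\S_{i-1}\cup O_{i-1})-f(\S_{i-1})\bigr)$ in expectation. This is where a second factor of $\gamma$ enters, through the reverse weak-submodularity inequality.

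\emph{Solving the recursion.} Combine the two recursions and take a suitable combination of $\mathbb{E}[f(\S_i)]$ and $\mathbb{E}[f(\S_i\cup O_i)]$. Solving over $\kappa$ steps and using monotonicity, $f(\S^\ast)\le f(\S_0\cup O_0)$, gives the factor $(1+1/\gamma)^{-2}$, and substituting Step 1 gives the claimed bound. Padding with dummy zero-gain elements handles bases of size less than $\kappa$.

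\textbf{Main obstacle.} I expect the loss-of-the-optimum step to be the hardest, because it needs weak submodularity applied to removals from $\S_{i-1}\cup O_{i-1}$. All sets involved have size at most $2\kappa$, which is why $c_{2\kappa}$ appears. I would verify that the restricted constants really cover those sets, and that $\widetilde{C}_1$, rather than $C_1$, is the correct smoothness constant for one-coordinate moves off a nonzero base point $\bw_\A$.
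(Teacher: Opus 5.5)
Your Step~1 is essentially the paper's proof of Lemma~\ref{thm:weak-submod}: RSC plus KKT for the block, and RSM on $\widetilde\Omega$ for singletons, giving $\gamma\ge c_{2\kappa}/\widetilde C_1$. The paper's proof of the theorem is just that lemma plus \citep[Lemma~3.3]{manupriya24a}, the residual-random-greedy matroid bound you are unpacking, so your route is right. The gap is in Step~2, at the step you flagged. After dropping $u_i$ by monotonicity, the loss-of-the-optimum step needs an upper bound on \emph{removal} losses:
\[
\textstyle\sum_{o\in O}\bigl[f(T)-f(T\setminus\{o\})\bigr]\le\gamma^{-1}\bigl[f(T)-f(\mathcal{S})\bigr],\qquad T=\mathcal{S}\cup O .
\]
No such ``reverse'' inequality follows from the submodularity ratio, which only lower-bounds singleton gains at the bottom set $\mathcal{S}$. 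For example, take $f(X)=h(|X|)$ on $r$ elements with increments $1,\dots,1,3$. Then $\gamma=1/2$, yet with $\mathcal{S}=\emptyset$ the left side is $3r$ and the right side is $2(r+2)$, so the inequality fails for $r>4$. There are two further mismatches. First, $u_i$ is uniform on $M_i$ with $|M_i|=|O_{i-1}|=k_i=\kappa-i+1$, so the loss carries the factor $1/k_i$. Replacing it by $1/\kappa$ is not justified, and padding with dummies would change the algorithm. Second, Algorithm~\ref{alg:omp-subroutine} chooses $M_i$ using the weights $\max(0,[\bg]_x)$, not marginal gains. So the exchange inequality $\sum_{u\in M_i}\bDelta f(u\mid\mathcal{S})\ge\sum_{o\in O}\bDelta f(o\mid\mathcal{S})$ is not available as you state it.

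All three points are repaired by going back to RSC/RSM rather than to $\gamma$. Write $f=f_{\M}$, $\gamma=c_{2\kappa}/\widetilde C_1$, $T=\mathcal{S}_{i-1}\cup O_{i-1}$ and $\bw_T=\zeta(T)$.

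\emph{Gain.} Max-weight bases depend only on the order of the weights, so a base maximizing $\sum g^+$ also maximizes $\sum (g^+)^2$. Your two sandwich bounds then give $\mathbb{E}[\bDelta f(u_i\mid\mathcal{S}_{i-1})]\ge\frac{1}{2\widetilde C_1 k_i}\sum_{o\in O_{i-1}}(g_o^+)^2\ge\frac{\gamma}{k_i}\bigl[f(T)-f(\mathcal{S}_{i-1})\bigr]$.

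\emph{Loss.} Complementary slackness gives $[\nabla\mathcal{U}_t(\bw_T)]_o(\bw_T)_o=0$. So RSM on $\widetilde\Omega$, applied at the feasible point $\bw_T-(\bw_T)_o\mathbf{e}_o$, yields $f(T)-f(T\setminus\{o\})\le\frac{\widetilde C_1}{2}(\bw_T)_o^2$. KKT also gives $\langle\nabla\mathcal{U}_t(\bw_T),\zeta(\mathcal{S}_{i-1})-\bw_T\rangle\le 0$, so RSC yields $f(T)-f(\mathcal{S}_{i-1})\ge\frac{c_{2\kappa}}{2}\|(\bw_T)_{O_{i-1}}\|_2^2$. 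Hence the expected loss is at most $\frac{1}{\gamma k_i}\bigl[f(T)-f(\mathcal{S}_{i-1})\bigr]\le\gamma^{-2}\,\mathbb{E}[\bDelta f(u_i\mid\mathcal{S}_{i-1})]$.

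\emph{Closing the recursion.} Summing the losses gives $\mathbb{E}f(\mathcal{S}_i\cup O_i)\ge f(\mathcal{S}^\ast)-\gamma^{-2}\,\mathbb{E}f(\mathcal{S}_i)$. Insert this into the gain inequality at the last round, where $k_\kappa=1$, and use monotonicity if $\mathbb{E}f(\mathcal{S}_{\kappa-1})$ already exceeds the target. This yields $\mathbb{E}f(\hat{\mathcal{S}}_t)\ge\frac{\gamma^2}{1+\gamma^2}f(\mathcal{S}^\ast)$, which implies the stated $(1+\widetilde C_1/c_{2\kappa})^{-2}$ bound.
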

The proof follows from \citep[Lemma~3.3]{manupriya24a}, where $c_{2\kappa}$ is the RSC constant of $\U_t(\cdot)$ on the $2\kappa$-sparse domain $\Omega_{2\kappa}$, and $\widetilde{C}_1$ is the RSM constant on $\widetilde{\Omega}=\{(\bw,\bw'):\|\bw-\bw'\|_0\le 1\}$ (see Section~\ref{sec:background}). Note that the approximation factor $(1+\widetilde{C}_1/c_{2\kappa})^{-2}$ is invariant to $\eta_t$, since both $\widetilde{C}_1$ and $c_{2\kappa}$ scale linearly with $\eta_t$.

\section{Algorithm}\label{sec:algorithm}

 \begin{figure*}[h!]
\centering
\begin{tcolorbox}[
  colback=gray!20, colframe=gray!20,
  left=2mm, right=2mm, top=1mm, bottom=1mm,
  boxsep=0pt, width=1\linewidth,
  before skip=2pt, after skip=2pt
]

\begin{minipage}[t]{0.42\linewidth}
\begin{algorithm}[H]
\small
\caption{$\methodprop$}
\label{alg:BatchGD}
\begin{algorithmic}[1]
  \STATE \textbf{Input:} $T \in \mathbb{N}$ (total training steps)
  \STATE \textbf{Output:} Final model parameters $\boldsymbol{\theta}_{T+1}$
  \STATE Initialize model parameters $\boldsymbol{\theta}_0$
  \FOR{$t = 1$ to $T$}
    \STATE Receive mini-batch $\mathcal{B}_t = \biguplus_{c=1}^{C}\mathcal{B}_t^c$
    \STATE Set domain budgets $\{\kappa_c^t\}_{c=1}^{C}$; build partition matroid $\mathcal{M}_t$
    \STATE Form aggregated utility $\U_t(\cdot)$ over $\mathcal{B}_t$ \eqref{eq:utDefinition}
    \STATE $\hat{\S}_t
      \gets
      \textsc{OMP}\!\left(\U_t(\cdot),\, \mathcal{B}_t,\, \mathcal{M}_t\right)$
      \hfill \textbf{(Alg.~\ref{alg:omp-subroutine})}
    \STATE Update model using $\hat{\S}_t$
  \ENDFOR
  \STATE \textbf{return} $\boldsymbol{\theta}_{T+1}$
\end{algorithmic}
\end{algorithm}
\end{minipage}
\hspace{1.0em}
\begin{minipage}[t]{0.55\linewidth}
\begin{algorithm}[H]
\small
\caption{\textsc{OMP}: Weakly Submodular Maximization under a Matroid Constraint}
\label{alg:omp-subroutine}
\begin{algorithmic}[1]
  \STATE \textbf{Input:}
  Objective $\U_t(\cdot)$,
  ground set $\mathcal{B}_t$,
  partition matroid $\mathcal{M}_t$ with capacities $\{\kappa_c^t\}_{c=1}^{C}$,
  budget $\kappa = \sum_{c=1}^{C} \kappa_c^t$
  \STATE \textbf{Output:}
  Selected support $\Re_{\kappa}$
  \STATE Initialize $\Re_{0} \gets \emptyset$, $\bw_{\Re_{0}} \gets \boldsymbol{0}$
  \STATE $\boldsymbol{g} \gets \nabla_{\bw_{\Re_{0}}} \U_t(\bw_{\Re_{0}}, \mathcal{B}_t)$

  \FOR{$i = 1$ to $\kappa$}
    \STATE Let $M_i$ be a maximal independent set of
    $\mathcal{M} / \Re_{i-1}$ maximizing
    $\sum_{\bx \in M_i} \max(0, [\boldsymbol{g}]_{\bx})$
    \STATE Sample $\bx$ uniformly at random from $M_i$
    \STATE $\Re_{i} \gets \Re_{i-1} \cup \{\bx\}$
    \STATE $\bw_{\Re_{i}}
      \gets
      \arg\max_{\bw \geq \bzero:\,\mathsf{supp}(\bw) \subseteq \Re_{i}} \U_t(\bw)$ \hfill \textbf{(APGA, Alg.~\ref{alg:apga})}
    \STATE $\boldsymbol{g} \gets \nabla_{\bw_{\Re_{i}}} \U_t(\bw_{\Re_{i}})$
  \ENDFOR

  \STATE \textbf{return} $\Re_{\kappa}$
\end{algorithmic}
\end{algorithm}
\end{minipage}

\end{tcolorbox}
\end{figure*}

Algorithm~\ref{alg:BatchGD} summarizes the training procedure for $\methodprop$. At each training step $t\in[T]$, we receive a mini-batch $\mathcal{B}_t$ partitioned into $C$ domains and specify per-domain cardinality (matroid) constraints $\{\kappa_c^t\}_{c=1}^{C}$. We then form the step-specific utility $\U_t$ over subsets of $\mathcal{B}_t$ and invoke the orthogonal matching pursuit (\textsc{OMP}) subroutine (Algorithm~\ref{alg:omp-subroutine})     to select a feasible support $\hat{\S}_t$ under these constraints. Finally, we update the model parameters using the selected subset $\hat{\S}_t$ and continue until step $T$.


\paragraph{\textsc{OMP} subroutine.}
Algorithm~\ref{alg:omp-subroutine} approximately maximizes the weakly submodular utility $\U_t$ subject to the partition matroid $\mathcal{M}_t$ (encoding the domain-wise budgets) with total sparsity $\kappa = \sum_{c=1}^{C} \kappa_c^t$. Starting from an empty support $\Re_0$, \textsc{OMP} iteratively adds one element per round. At iteration $i$, it computes the gradient $\bg=\nabla_{\bw}\U_t(\bw_{\Re_{i-1}})$, constructs a maximal independent set $M_i$ in the contracted matroid $\mathcal{M}_t/\Re_{i-1}$ that maximizes the aggregate positive gradient mass $\sum_{\bx\in M_i}\max\bigl(0,[\bg]_{\bx}\bigr)$, samples an element uniformly from $M_i$, and augments the support. Given the updated support $\Re_i$, it refits the weights by solving $\bw_{\Re_i}\in\arg\max_{\bw\ge\bzero:\,\mathsf{supp}(\bw)\subseteq\Re_i} \U_t(\bw)$ (via APGA) and refreshes the gradient for the next round. After $\kappa$ rounds, the selected support $\Re_\kappa$ is returned.

In step~6 of Algorithm~\ref{alg:omp-subroutine}, $\mathcal{M}_t/\Re = (\B_t \setminus \Re,\, \I_{\mathcal{M}_t/\Re})$ denotes the contraction of $\mathcal{M}_t$ by $\Re$: a matroid on $\B_t \setminus \Re$ whose independent sets are $\I_{\mathcal{M}_t/\Re}:= \{\S \subseteq \B_t \setminus \Re : \S \cup \Re \in \I_t\}$. We present the pseudocode for the accelerated projected gradient ascent (APGA) subroutine in Appendix~\ref{Appendix:algorithms} (Algorithm~\ref{alg:apga}).


\par\smallskip
\noindent
\textbf{Complexity of Algorithm \ref{alg:omp-subroutine}}: The per-iteration cost of Algorithm~\ref{alg:omp-subroutine} is $\mathcal{O}(N + \tau M)$, where $N$ is the cost of computing the candidate gradients (line~10), $M$ is the gradient cost per APGA iteration (line~9), and $\tau$ is the number of APGA iterations. Since \textsc{OMP} runs $\kappa$ rounds, the total cost is $\mathcal{O}\bigl(\kappa(N + \tau M)\bigr)$.

\par\smallskip
\noindent
\textbf{Uniform Constraints.} We allocate the per-domain selection budget at iteration $t$ proportionally to domain size,
$\kappa^t_{c}= \bigl\lfloor \kappa\,|\B^c_t|/|\B_t| \bigr\rceil$,
so that each domain $c$ receives a budget in proportion to its representation in the mini-batch (rounded so that $\sum_{c=1}^{C}\kappa_c^t=\kappa$).

\subsection{Independent Domain wise selection}\label{sec:IndDoWs} 

We also consider an independent domain-selection baseline based on \citep{wang2024greats}, in which each domain is allotted a budget $\kappa_c^t$ and selected separately. The resulting objective is
\begingroup
\setlength{\abovedisplayskip}{2pt}\setlength{\belowdisplayskip}{2pt}
\setlength{\abovedisplayshortskip}{2pt}\setlength{\belowdisplayshortskip}{2pt}
\begin{equation*}
  f^{\mathsf{ID}}(\S)
  = \sum_{c=1}^{C}
    \max_{\bw_c \in \{0,1\}^{|\B_t^c|}:\,\|\bw_c\|_{0} \leq \kappa_c^t}
    \mathcal{U}_{t}(\bw_c,\B_t^c; \mathbf{z}_{\mathsf{val}}).
\end{equation*}
\vspace{-2pt}
We additionally consider a relaxed variant (\texttt{IWD}) that performs domain-wise independent selection with continuous prototype weights:
\begin{equation}
\label{eq:IndependentWeightedSel}
  f^{\mathsf{IWD}}(\S)
  = \sum_{c=1}^{C}
    \max_{\bw_c \in \RR^{|\B_t^c|}_{+}:\,\|\bw_c\|_{0} \leq \kappa_c^t}
    \mathcal{U}_{t}(\bw_c,\B_t^c; \mathbf{z}_{\mathsf{val}}).
\end{equation}
\endgroup
where the overall selection is $\S=\bigcup_{c=1}^{C}\mathsf{supp}(\bw_c)$. We term this baseline \texttt{ID}. Each domain-specific selection vector $\bw_c$ is binary-valued, inducing a hard subset selection within each domain. Again $\S=\bigcup_{c=1}^{C}\mathsf{supp}(\bw_c)$.

\subsection{Is $\methodprop$ a Scalable Proxy for Continuous Data Mixture Methods?}

\textbf{DoReMi Objective}\label{section:doremi}. Continuous data mixture methods such as DoReMi \citep{xie2023doremi} compute optimal domain weights by explicitly optimizing a minimax objective over domains. Formally, DoReMi solves
\begin{equation*}
\min_{\btheta}\; \max_{\boldsymbol{\alpha} \in \Delta^{C}}\; \L(\btheta, \boldsymbol{\alpha})
:= \sum_{c=1}^{C} \boldsymbol{\alpha}_c\, \frac{1}{|\V_c|}
   \sum_{\bx \in \V_c} \bigl( \ell_{\btheta}(\bx) - \ell_{\mathrm{ref}}(\bx) \bigr).
\end{equation*}
The optimal $\boldsymbol{\alpha}$ defines a continuous mixture distribution over training data,
\(
\Pi_{\boldsymbol{\alpha}}
=
\sum_{c=1}^{C}
\boldsymbol{\alpha}_c
\frac{1}{|\V_c|}
\sum_{\bx \in \V_c} \delta_{\bx},
\)
which upweights domains with higher excess loss $\ell_{\btheta}(\bx) - \ell_{\mathrm{ref}}(\bx)$. In practice this objective is optimized via Group Distributionally Robust Optimization (Group DRO) \citep{sagawa2019distributionally}: domains exhibiting higher excess loss are upweighted, increasing their influence on subsequent gradient updates. The resulting effective data mixture is the time-averaged iterate
\(
\bar{\boldsymbol{\alpha}}
=
\sum_{t=1}^{T}
\boldsymbol{\alpha}_t/T,
\)
which is then used to train the final main model (parameterized by $\btheta_m$):
\begin{equation*}
\min_{\btheta_m}\, \Bigl[\sum_{c=1}^{C}\bar{\boldsymbol{\alpha}}_c\Bigl(\frac{1}{|\V_c|}\sum_{\bx \in \V_c}\ell_{\btheta_m}(\bx)\Bigr) \Bigr].
\end{equation*}%
Crucially, DoReMi represents domain importance through a \emph{continuous} reweighting of the data distribution. This raises the question of whether similar information can be captured using only \emph{discrete} sample selection. To this end, we view $\methodprop$ as inducing an implicit domain mixture through its per-step selected subset $\hat{\S}_t$.
 
\par\smallskip
\noindent
\textbf{$\methodprop$}. For a direct comparison, we define the induced domain weight of $\methodprop$ at iteration $t$ as
\(
\boldsymbol{\alpha}^{\texttt{DOMW}}_{c,t}
=
{\left| \hat{\S}_t \cap \V_c \right|}\big/{\left| \hat{\S}_t \right|},
\)
which measures the empirical mass assigned to domain $c$ by the selected samples. We denote by $\boldsymbol{\alpha}^{\texttt{DoRM}} \coloneqq \bar{\boldsymbol{\alpha}}$ the domain mixture obtained by DoReMi. This construction lets us interpret $\methodprop$ as a discrete, sample-level approximation to the continuous domain reweighting. Figure~\ref{fig:pplx_doremi_results} and Table~\ref{tab:doremi-vs-joint} indicate the effectiveness of our method.

\section{Experimental Results}

\newcommand{\rot}[1]{\rotatebox{55}{\scriptsize #1}}

\begin{table*}[h!]
    \centering
    \small
    \setlength{\tabcolsep}{6pt}
    \renewcommand{\arraystretch}{1.2}

    \caption{Downstream results on math-reasoning benchmarks after fine-tuning \textsc{Qwen2.5-3B} on \textsc{MetaMathQA} (subset fraction=50\%).}
    \label{table:main}

    \resizebox{0.98\textwidth}{!}{%
    \providecommand{\orangemiss}{\textcolor{orange}{-}}%
    \begin{tabular}{l c c c c c c c c | c}
        \toprule
        \textbf{Method} 
        & \textbf{NumGLUE}
        & \textbf{MMLU-Math}
        & \textbf{GSM8K}
        & \textbf{SVAMP}
        & \textbf{SimulEq}
        & \textbf{DeepMind}
        & \textbf{AQuA}
        & \textbf{SAT}
        & \textbf{Avg} \\
        \midrule
        \texttt{Random}
    & 0.5704 & 0.5332 & 0.7771 & 0.832 & 0.5027 & 0.3281 & 0.6086 & 0.6594 & 0.6014 \\
        \texttt{GradNorm} \citep{katharopoulos2018not}
    & \textbf{0.6084} & 0.5565 & 0.7612 & 0.8410 & 0.5156 & 0.3340 & 0.5945 & 0.6727 & 0.6105 \\
        \texttt{COLM} \citep{nguyen2024mini}
        & 0.5988 & 0.5503 & 0.7627 & 0.8420 & 0.5156 & 0.3570 & 0.5827 & 0.6864 & 0.6119 \\
        \texttt{GREATS} \citep{wang2024greats}
        & 0.5873 & \textbf{0.6109} & 0.7892 & 0.8480 & 0.5000 & \textbf{0.3700} & 0.6142 & 0.6727 & 0.6240 \\
        \texttt{ID}
        & 0.5960 & 0.5965 & \textbf{0.7771} & \textbf{0.8540} & 0.5117 & 0.3700 & 0.5866 & 0.7182 & 0.6263 \\
        \texttt{IWD}
        & 0.6115 & 0.5795 & 0.7699 & 0.8500 & 0.5218 & 0.3675 & 0.5980 & 0.6818 & 0.6225 \\
        $\methodprop$ \ (\emph{Ours})
        & 0.5998 & 0.6099 & 0.7665 & 0.8410 & \textbf{0.5467} & 0.3500 & \textbf{0.6220} & \textbf{0.7545} & \textbf{0.6363} \\
        \bottomrule
    \end{tabular}}
\end{table*}

\begin{table*}[h!]
\centering
\footnotesize
\setlength{\tabcolsep}{3.5pt}
\renewcommand{\arraystretch}{1.25}
\caption{
Performance by training \textsc{Qwen2.5-3B} on \textsc{Mol-Instructions}~\citep{fang2023mol} with $|\hat{S_t}| = 128$, evaluated using BLEU / Edit-distance. Reported at $\kappa = 16\,/\,32$, meaning subset-fractions of $12.5\,/\, 25\%$. For BLEU \textbf{higher is better}. For Edit-distance, \textbf{lower is better}. $\methodprop$ consistently beats baselines.
} 
\label{table:mol}
\resizebox{0.98\textwidth}{!}{%
\rowcolors{2}{gray!15}{white}
\begin{tabular}{lcccccccc|cc}

\toprule
\multirow{2}{*}{\textbf{Baseline}} 
& \multicolumn{2}{c}{\textbf{Desc.\ guided design}}
& \multicolumn{2}{c}{\textbf{Forward reaction pred.}}
& \multicolumn{2}{c}{\textbf{Reagent pred.}}
& \multicolumn{2}{c}{\textbf{Retrosynthesis}}
& \multicolumn{2}{c}{\textbf{Average}} \\
\cmidrule(lr){2-3}
\cmidrule(lr){4-5}
\cmidrule(lr){6-7}
\cmidrule(lr){8-9}
\cmidrule(l){10-11}
& \textbf{12.5\%} & \textbf{25\%}
& \textbf{12.5\%} & \textbf{25\%}
& \textbf{12.5\%} & \textbf{25\%}
& \textbf{12.5\%} & \textbf{25\%}
& \textbf{12.5\%} & \textbf{25\%} \\
\midrule
\texttt{Random} & \multicolumn{2}{c}{0.37\,/\,47.69} & \multicolumn{2}{c}{0.83\,/\,19.76} & \multicolumn{2}{c}{0.49\,/\,31.78} & \multicolumn{2}{c|}{0.81\,/\,23.83} & \multicolumn{2}{c}{0.62\,/\,30.77} \\
\texttt{GradNorm} & 0.27\,/\,48.30 & 0.32\,/\,\textbf{45.70} & 0.67\,/\,23.09 & 0.73\,/\,21.21 & 0.19\,/\,65.40 & 0.23\,/\,58.62 & 0.74\,/\,28.33 & 0.78\,/\,26.92 & 0.47\,/\,41.28 & 0.52\,/\,38.11 \\
\texttt{COLM} \citep{nguyen2024mini} & 0.34\,/\,49.30 & 0.28\,/\,46.90 & \textbf{0.84}\,/\,\textbf{19.37} & 0.83\,/\,\textbf{19.57} & 0.48\,/\,\textbf{31.10} & 0.46\,/\,32.27 & 0.81\,/\,\textbf{23.39} & 0.81\,/\,23.80 & 0.62\,/\,\textbf{30.79} & 0.59\,/\,30.64 \\
\texttt{GREATS} \citep{wang2024greats} & 0.34\,/\,50.93 & 0.35\,/\,46.96 & 0.83\,/\,19.73 & 0.82\,/\,20.56 & 0.48\,/\,31.84 & 0.50\,/\,31.37 & 0.81\,/\,24.01 & 0.81\,/\,24.48 & 0.61\,/\,31.63 & 0.62\,/\,30.84 \\
\texttt{ID} & 0.35\,/\,\textbf{47.59} & 0.36\,/\,46.70 & 0.83\,/\,20.09 & 0.82\,/\,20.50 & \textbf{0.49}\,/\,31.79 & 0.47\,/\,32.50 & 0.81\,/\,24.67 & 0.81\,/\,24.10 & 0.62\,/\,31.04 & 0.61\,/\,30.95 \\
\texttt{IWD} & 0.38\,/\,48.25 & \textbf{0.42}\,/\,47.60 & 0.83\,/\,19.73 & 0.83\,/\,19.58 & 0.48\,/\,32.92 & 0.47\,/\,33.32 & 0.81\,/\,24.39 & 0.81\,/\,23.76 & 0.62\,/\,31.32 & 0.63\,/\,31.07 \\
$\methodprop$ (\emph{Ours}) & \textbf{0.39}\,/\,49.15 & \textbf{0.42}\,/\,47.27 & \textbf{0.84}\,/\,19.44 & \textbf{0.84}\,/\,19.60 & 0.48\,/\,32.03 & \textbf{0.54}\,/\,\textbf{28.29} & \textbf{0.82}\,/\,23.44 & \textbf{0.82}\,/\,\textbf{23.70} & \textbf{0.63}\,/\,31.02 & \textbf{0.66}\,/\,\textbf{29.71} \\
\bottomrule
\end{tabular}}
\end{table*}

\begin{figure*}[h!]
  \centering
  \begin{minipage}{0.55\textwidth}
    \centering
    \resizebox{\textwidth}{!}{%
      \input{icml2026/figures/qwen_pplx}%
      \input{icml2026/figures/doremi_pplx}%
    }
  \end{minipage}\hfill
  \begin{minipage}{0.42\textwidth}
    \centering
    \scriptsize
    \captionof{table}{Comparison of data mixture strategies}
    \label{tab:doremi-vs-joint}
    \rowcolors{2}{gray!20}{white}
\begin{tabular}{l c c c}
\toprule
\textbf{Dataset} & \textbf{DoReMi} & \textbf{$\methodprop$} & \boldmath$\Delta$ \\
\midrule
MMLU-Math & 53.08 & 60.57 & +7.49 \\
AQuA & 58.66 & 61.02 & +2.36 \\
SAT & 72.27 & 76.36 & +4.09 \\
\midrule
\textbf{Average} & \textbf{61.34} & \textbf{65.99} & \textbf{+4.65} \\
\bottomrule
\end{tabular}
\end{minipage}
  \caption{Validation \texttt{log-pplx} on \textsc{MetaMathQA} (Left), comparison with DoReMi in terms of validation \texttt{log-pplx} (Middle), and downstream benchmark performance (Right).}
  \label{fig:pplx_doremi_results}
\end{figure*}

\textbf{Baselines} We compare our method $\methodprop$ against the following baselines  (i) \textbf{\texttt{GREATS}} \citep{wang2024greats}, (ii) \textbf{\texttt{ID}}: An independent domain selection baseline. While \texttt{GREATS} objective have a single overall cardinality constraint across all the domains, \texttt{ID} selects mini-batches from each domain independently using \texttt{GREATS} (iii) \textbf{\texttt{IWD}} as defined in Section \ref{sec:IndDoWs}
(iv)  \textbf{\texttt{COLM}} \citep{nguyen2024mini} (v)  \textbf{\texttt{GradNorm}} \citep{katharopoulos2018not} \textbf{\texttt{Random}}: Selects mini-batches randomly.

\subsection{Standard Finetuning}
\textbf{Models and Training Details.} We evaluate on \textsc{Qwen2.5} (3B and 7B) \citep{qwen2.5}, \textsc{Llama3} (3B and 8B)~\citep{grattafiori2024llama3herdmodels}.
 For finetuning, we employ LoRA with rank $16$, $\alpha=96$, and dropout $0.05$. Following~\citep{nguyen2024mini}, LoRA is applied to all attention matrices (\textsc{QKV}$_{\text{proj}}$) and two fully connected layers, with a learning rate of 2e-5.

\par\smallskip
\noindent
\textbf{Finetuning Datasets.} We train on the \textsc{MetaMathQA} dataset \citep{yu2023metamath}, which consists of 260K instruction tuning samples curated from 14 highly imbalanced data sources. Please see Appendix \ref{Appendix: Standard Finetuning Datasets Details} for detailed description.

\par\smallskip
\noindent
\textbf{Evaluation datasets.} Following~\citep{yue2023mammoth}, we evaluate the finetuned models on both in-domain and out-of-domain benchmarks. The in-domain set comprises GSM8K~\citep{cobbe2021training}, MATH~\citep{hendrycks2021measuring}, and NumGLUE~\citep{mishra2022numglue}, whereas the out-of-domain set includes SVAMP~\citep{patel2021nlp}, Mathematics~\citep{davies2021advancing}, SimulEq~\citep{koncel2016mawps}, 
MMLU~\citep{hendrycks2021measuringmassivemultitasklanguage}, and AQuA~\citep{ling2017programinductionrationalegeneration}.

\par\smallskip
\noindent
\textbf{Results on finetuning for Math-reasoning:} We train all models for 2048 steps with batch size of 128 and prototype-ratio as 50\% (thus effective batch size $\kappa$ = 64). Table~\ref{table:main} shows the results of source-wise fine-tuning on \textsc{MetaMathQA}. The table indicates that $\methodprop$ is significantly better than other baselines. Additional results for \textsc{Qwen2.5-7B}, \textsc{Llama-3.2-3B}, and \textsc{Llama-3.1-8B} are reported in the supplementary material (\cref{fig:pplx_llama,tab:qwen25-7b-125,tab:llama32-3b-125,tab:llama3b-125}).
\subsection{Molecule generation Experiments}

\textbf{Finetuning Datasets.} To determine the effectiveness of $\methodprop$ on out-of-domain datasets, we inspect baselines on the task of Molecule generation. Roughly, the model must generate valid and parseable molecule in the SMILES~ \citep{weininger1988smiles} format, given an instruction like:
\begin{tcolorbox}[
  colback=gray!15,
  colframe=gray!15,
  boxsep=3pt,
  left=6pt,right=6pt,top=5pt,bottom=5pt,
  arc=2.5pt,
  boxrule=0pt,
  enhanced,
  width=\linewidth,
  before skip=6pt,
  after skip=6pt,
]
\ttfamily\footnotesize Design a protein which possesses methylglyoxal synthase activity.
\end{tcolorbox}
Please see the Appendix \ref{Appendix: Mol-Instructions Dataset Details} for a detailed example. We utilise \textsc{Mol-Instructions}~\citep{fang2023mol} and evaluate on four domains:
\begingroup
\setlength{\topsep}{0pt}
\setlength{\partopsep}{0pt}
\setlength{\itemsep}{0pt}
\setlength{\parsep}{0pt}
\setlength{\parskip}{0pt}
\begin{enumerate}
  \item \textbf{Description-guided design:} generate a valid molecule (SMILES) from a natural-language functional description.
  \item \textbf{Forward reaction prediction:} predict the reaction product given reactants and reagents.
  \item \textbf{Reagent prediction:} predict auxiliary reagents given reactants and product.
  \item \textbf{Single-step retrosynthesis:} infer feasible reactants given a target product.
\end{enumerate}
\endgroup
\textbf{Models and Training Details.} We finetune \textsc{Qwen2.5-3B} using LoRA~\citep{hu2022lora} of rank 16 and alpha 96, using a learning rate of 2e-4 for 1024 steps using \textit{Adam}~\citep{adam2014method}. Following the experimental setup in ~\citep{wang2024greats}, we use a validation batch of size 2 to guide the minibatch selection.  We split the dataset into validation and test datasets, with each domain containing 1000 examples. We set the $\hat{S_t}$, the full-batch size to be 64 for all experiments. 

\par\smallskip
\noindent
\textbf{Results and discussion.}
Table \ref{table:mol} shows the BLEU score and Edit-distance of baselines on~\citep{fang2023mol}. Table~\ref{table:mol} indicates that $\methodprop$ is consistently better than the baselines on both metrics and at varying selection fraction~$\kappa$.



\subsection{Mixture methods}
As discussed in Section~\ref{section:doremi}, to compare with offline mixture methods like \citep{xie2023doremi}, we utilise the baseline and proxy model as \textsc{GPT2-300M}~\citep{radford2019language}, and finetune on \textsc{MetaMathQA}~\citep{yu2023metamath}, using LoRA rank 16, alpha 32, dropout $0.05$ for 1 epoch using \textit{Adam}~\citep{adam2014method}. After training the DoReMi proxy, the learned domain mixture deviates sharply from the uniform initialization.
Concretely, while the proxy is trained with uniform assigned weights ($\alpha_c=0.125$ for all $c$ in 8 domains), the learned mixture concentrates most mass on \texttt{MATH\_Rephrased} ($0.608$), with secondary weight on \texttt{GSM\_FOBAR} ($0.177$) and \texttt{MATH\_FOBAR} ($0.076$); all other domains receive comparatively small mass (Fig.~\ref{fig:doremi-weights}). This skew suggests that, under our setup, the DoReMi proxy quickly allocates probability mass toward domains that are closest in distribution to the finetuning target, potentially limiting the benefits of maintaining broader coverage across the remaining domains.

\begin{wrapfigure}{l}{0.52\linewidth}
  \vspace{-2mm}
  \centering
  \setlength{\intextsep}{2pt}
  \setlength{\columnsep}{10pt}
  \resizebox{\linewidth}{!}{\input{icml2026/figures/doremi_wts}}
  \vspace{-2mm}
  \caption{DoReMi: uniform assigned weights vs learned domain mixture after 1 epoch of baseline and proxy training.}
  \label{fig:doremi-weights}
  \vspace{-8mm}
\end{wrapfigure}

This is also reflected in the training dynamics. Fig.~\ref{fig:pplx_doremi_results} indicates that $\methodprop$ has consistently lower validation loss and better downstream performance as compared with DoReMi. Notably, the gain persists even though DoReMi is allowed to adaptively reweight the training mixture, which supports our claim that $\methodprop$ yields a more reliable improvement signal than proxy-based mixture learning in this setting.

\subsection{Reducing Gradient Conflicts across Domains}\label{sec:grad-conflicts}
In multi-domain training, different domains can induce gradients that point in competing directions, slowing convergence and degrading transfer across tasks \citep{yu2020gradient,liu2021conflict}. This issue is amplified under subset selection, where the selected mini-batch can become overly specialized to a single domain or objective.

We compare two selection strategies: (i) \texttt{ID} that selects samples separately within each domain and then concatenates them, and (ii) our \emph{joint} selection approach $\methodprop$ that coordinates choices across domains under a global budget. The key distinction is that \texttt{ID} enforces per-domain quotas and therefore cannot trade budget between domains when their candidate sets are redundant or antagonistic, whereas joint selection explicitly accounts for cross-domain interactions when constructing the subset.

Operationally, joint selection prefers sets whose gradients are both high-utility and mutually compatible, which reduces the chance of selecting examples that ``fight'' each other during the update step. This coordination is particularly important when domains differ in scale or difficulty: forcing equal per-domain selection can over-emphasize noisy or low-signal domains in a given step, increasing gradient variance and inducing oscillatory training.

Figure~\ref{fig:grad-conflicts} illustrates the resulting reduction in conflicting gradient pairs and a more stable optimization trajectory. Consistent with this interpretation, we observe fewer near-opposite gradient pairs under joint selection, which helps preserve progress on multiple domains simultaneously rather than repeatedly undoing updates made by other domains.

\noindent
\begin{minipage}[t]{0.42\textwidth}
  \centering
  \vspace{0pt}
  \resizebox{\linewidth}{!}{\input{icml2026/figures/conflict}}
  \captionof{figure}{\textbf{Reducing gradient conflicts.} \texttt{ID} ignores cross-domain redundancy and can pick mutually conflicting gradients. \methodprop~trades off samples across domains, improving alignment and reducing destructive interference.}
  \label{fig:grad-conflicts}
\end{minipage}\hfill
\begin{minipage}[t]{0.56\textwidth}
  \section{Ablations}
  \subsection{Effect of Randomised FFT}
  \small
  We measure the effect of performing Randomised FFT \cite{10.1145/1132516.1132597} on the gradients to reduce inter-GPU memory traffic. In our setting, the utility computation requires repeated inner products between per-example gradients; communicating these high-dimensional vectors across devices can become a dominant bottleneck. Randomised FFT serves as a lightweight structured projection that reduces the amount of data exchanged while approximately preserving the geometry of the gradient space.

  Practically, this compression is most beneficial when we scale up either (i) the number of examples scored per selection step or (ii) the number of layers from which gradients are extracted. The latter case is particularly challenging because concatenating multi-layer gradients increases both memory footprint and communication volume. This motivates pairing multi-layer gradients with an additional dimensionality reduction step such as FJLT.
\end{minipage}
\par
\FloatBarrier

\subsection{Effect of number of layers for Gradients}
To study the trade-off between signal and cost when using gradients from multiple layers, we experiment with \textsc{Qwen2.5-2B} (LoRA rank 64, alpha 128). Table~\ref{tab:fjlt-feasibility} highlights a practical bottleneck: without FJLT, per-example gradients quickly become infeasible beyond 2 layers due to GPU memory limits, whereas FJLT makes 3--5 layer gradients tractable. Intuitively, FJLT allows us to retain cross-layer information while keeping the per-example representation compact enough for batched scoring.

Figure~\ref{fig:latency-breakdown} further shows that although using more than one layer can improve validation loss, it introduces additional runtime overhead relative to using only the final layer. We therefore treat the number of layers as a tunable knob: increasing it improves the fidelity of the selection signal, but with diminishing returns and higher cost, so the best choice depends on the training budget and hardware constraints.

\begin{center}
  \begin{minipage}[t]{0.48\columnwidth}
    \vspace{0pt}
    \centering
    \resizebox{0.82\linewidth}{!}{\input{icml2026/figures/ablation_grads}}
  \end{minipage}\hfill
  \begin{minipage}[t]{0.48\columnwidth}
    \vspace{0pt}
    \centering
    \small
    \setlength{\tabcolsep}{3pt}
    \renewcommand{\arraystretch}{0.95}
    \resizebox{0.62\linewidth}{!}{%
      \begin{tabular}{@{}l r@{}}
        \toprule
        \textbf{\#Layers} & \textbf{Time (ms)} \\
        \midrule
        1-layer & 2682 \\
        \quad --grads & 2272 \\
        \quad --subsel & 311 \\
        \midrule
        3-layer & 2822.3 \\
        \quad --grads  & 2468 \\
        \quad --subsel & 315 \\
        \bottomrule
      \end{tabular}%
    }
  \end{minipage}

  \vspace{-1mm}
  \captionof{figure}{Latency breakdown across 1-layer vs 3-layer gradient computation (Qwen2.5-2B, LoRA rank 64).}
  \label{fig:latency-breakdown}
\end{center}

\begin{table}[t]
  \centering
  \caption{Feasibility of per-example gradient computation with and without FJLT (Qwen2.5-3B, LoRA rank 64).}
  \label{tab:fjlt-feasibility}
  \resizebox{0.38\columnwidth}{!}{%
    \begin{tabular}{l c c}
      \toprule
      \textbf{Method} & \textbf{Raw} & \textbf{FJLT} \\
      \midrule
      Grads-1 layer  & \checkmark & \checkmark \\
      Grads-2 layers & \checkmark & \checkmark \\
      Grads-3 layers & OOM        & \checkmark \\
      Grads-5 layers & OOM        & \checkmark \\
      \bottomrule
    \end{tabular}
  }
\end{table}

\subsection{Comparisons with different Models}
\noindent To check the robustness of the gains, we evaluate $\methodprop$ on \textsc{Qwen2.5} (3B, 7B) and \textsc{Llama3} (3B, 8B). Figure~\ref{fig:compare_models} shows that $\methodprop$ yields consistent improvements for subset selection across these models. This comparison stresses that the selection signal is not an artifact of a particular backbone: the two families differ in architecture and training recipe, yet the relative ordering between \textsc{Random} and $\methodprop$ remains unchanged. Moreover, we include both smaller and larger checkpoints within each family to verify that the effect is not confined to a narrow parameter regime. Together, these results suggest that the validation-guided criterion underlying $\methodprop$ is reasonably portable across model scales and model families, making it a practical default when transferring subset selection pipelines to new LLM backbones.

\begin{figure}[!t]
  \centering
  \begin{minipage}[t]{0.58\columnwidth}
    \small\raggedright
    Figure~\ref{fig:compare_models} shows that the gains over \textsc{Random} persist across both model families, suggesting the selection criterion is not tied to a specific tokenizer, pretraining mixture, or parameter count. Notably, improvements remain visible even in the larger models, where naive subset selection can otherwise be harder to distinguish from noise due to stronger base generalization.

    \vspace{1mm}
    \noindent Across these settings, the qualitative trend is consistent: $\methodprop$ preferentially keeps examples that better match the validation gradient direction, which appears to transfer across model families despite differences in representation space and optimization dynamics. This indicates that the selection objective captures a relatively model-agnostic notion of ``helpfulness'' for downstream generalization rather than exploiting idiosyncrasies of a single backbone.
  \end{minipage}\hfill
  \begin{minipage}[t]{0.38\columnwidth}
    \centering
    \vspace{0pt}
    \resizebox{\linewidth}{!}{\input{icml2026/figures/compare-models}}
  \end{minipage}
  \caption{Comparing \textsc{Random} selection with $\methodprop$ for \textsc{Qwen2.5} and \textsc{Llama3} models for downstream accuracy.}
  \label{fig:compare_models}
\end{figure}


\FloatBarrier
\section{Conclusion}We introduced $\methodprop$ that explicitly optimizes domain representations under partition matroid constraints. By selecting samples jointly across domains using a validation-guided gradient matching objective, our method avoids the limitations of independent per-domain selection and expensive proxy-based weighting schemes. We showed that the resulting objective is weakly submodular, enabling an efficient OMP algorithm with theoretical guarantees. Empirically, our approach consistently improves performance, while reducing gradient conflicts during training. demonstrating it as an effective principled strategy for training across diverse domains.

\section*{Acknowledgements}
PC acknowledges the Microsoft Research India PhD Award and Prime Minister Research Fellowship to support this research work. GR thanks Bank of Baroda Chair Professorship. PJ acknowledges the support of SBI Foundation Hub for Data \& Analytics grant and Anusandhan National Research Foundation for ARG-MATRICS grant. 

\section*{Impact Statement}
\label{Appendix:BroaderImpact}
This work develops a framework for prototype-selection aiming to select points jointly across domains in multi-domain settings. There are many positive societal consequences of our work, none which we feel must be specifically highlighted here.
\bibliographystyle{icml2026}
\bibliography{icml2026/references_copy}
\addtocontents{toc}{\protect
\setcounter{tocdepth}{2} 
}

\appendix
\crefalias{section}{appendix}
\crefalias{subsection}{appendix}

\addappheadtotoc


\renewcommand{\thesection}{\Alph{section}} 
\renewcommand{\thesubsection}{\Alph{section}.\arabic{subsection}} 

\input{icml2026/supplemental}

\end{document}


\begin{center}
  {\LARGE Appendix C: Restated Lemmas/Theorems and Clean Proofs}\\[4pt]
  {\large (Minibatch Selection for LLMs via Partition Matroid Constrained Gradient Matching)}
\end{center}
\vspace{0.6em}

\paragraph{Notation.}
Fix a training step $t$. Let $B_t$ be the current batch, and let $\kappa$ be the total selection budget under a partition matroid $\mathcal M_t$ over $B_t$ with per-domain capacities $\{\kappa_c\}_c$ and rank $\kappa=\sum_c \kappa_c$. For a set $S\subseteq B_t$, define the concave quadratic
\begin{align}
  U_t(w) \;=\; w^\top \mu_{\theta_t} - \tfrac12 w^\top K_{\theta_t} \, w, \qquad w\in\mathbb R^{|B_t|}, \ w\ge 0,
\end{align}
where $K_{\theta_t}\in\mathbb R^{|B_t|\times |B_t|}$ is symmetric, and define the induced set function
\begin{align}
  f(S) \;=\; \max\big\{ U_t(w) : w\ge 0,\; \mathrm{supp}(w)\subseteq S\big\}.\label{eq:f-def}
\end{align}
Let $m_\kappa$ and $M_\kappa$ denote the $\kappa$-sparse extremal eigenvalues of $K_{\theta_t}$,
\begin{align}
  m_\kappa := \min_{\substack{\|x\|_2=1\\ \|x\|_0\le \kappa}} x^\top K_{\theta_t} x,
  \qquad
  M_\kappa := \max_{\substack{\|x\|_2=1\\ \|x\|_0\le \kappa}} x^\top K_{\theta_t} x.
\end{align}
We write $g=\nabla U_t(w)$ and use the nonnegative part operator $x^+ := \max\{x,0\}$ applied componentwise.

\begin{assumption}[PSD curvature and restricted smoothness/concavity]\label{ass:psd}
There exists a positive semidefinite curvature proxy (e.g., Gauss--Newton/Fisher or the PSD part of the validation Hessian) such that $K_{\theta_t}\succeq 0$. The quadratic $U_t$ is $m_\kappa$-restricted strongly concave and $M_\kappa$-restricted smooth on the set of $\kappa$-sparse vectors, with $0<m_\kappa\le M_\kappa<\infty$.
\end{assumption}

\begin{remark}
Assumption~\ref{ass:psd} is standard for concave-quadratic maximization on sparse supports and holds when $K_{\theta_t}=G_{\theta_t}\,H\,G_{\theta_t}^\top$ with $H\succeq 0$ and $G_{\theta_t}$ the per-example gradient matrix.
\end{remark}

\section{Core properties}

\begin{lemma}[RSC/RSM for the concave quadratic; restatement of Lemma~4.2]\label{lem:RSC-RSM}
Under Assumption~\ref{ass:psd}, for any $\kappa$-sparse vectors $w_1,w_2$,
\begin{align}
  \frac{m_\kappa}{2}\,\|w_1-w_2\|_2^2
  \;\le\;
  U_t(w_1)-U_t(w_2)-\langle \nabla U_t(w_2),\,w_1-w_2\rangle
  \;\le\;
  \frac{M_\kappa}{2}\,\|w_1-w_2\|_2^2.
\end{align}
\end{lemma}
\begin{proof}
Since $\nabla^2 U_t(w)=-K_{\theta_t}$ is constant, for any $\Delta=w_1-w_2$ we have
\(
U_t(w_1)-U_t(w_2)-\langle \nabla U_t(w_2),\Delta\rangle = -\tfrac12\,\Delta^\top K_{\theta_t}\,\Delta.
\)
When $w_1,w_2$ are $\kappa$-sparse, so is $\Delta$, hence
\(
\tfrac12\, m_\kappa\,\|\Delta\|_2^2\le \tfrac12\,\Delta^\top K_{\theta_t}\,\Delta\le \tfrac12\, M_\kappa\,\|\Delta\|_2^2.
\)
The inequalities follow by the definition of the $\kappa$-sparse extremal eigenvalues.
\end{proof}

\begin{lemma}[Monotonicity; restatement of Lemma~4.3]\label{lem:monotone}
If $\tilde A\subseteq \tilde B\subseteq B_t$, then $f(\tilde A)\le f(\tilde B)$.
\end{lemma}
\begin{proof}
The feasible region \(\{w\ge 0: \mathrm{supp}(w)\subseteq \tilde A\}\) is contained in \(\{w\ge 0: \mathrm{supp}(w)\subseteq \tilde B\}\). Maximizing the same concave objective over a larger feasible set cannot decrease the optimum.
\end{proof}

\begin{theorem}[Weak submodularity; restatement of Theorem~C.1]\label{thm:weak-submod}
Under Assumption~\ref{ass:psd}, the set function $f$ in \eqref{eq:f-def} is monotone and $\gamma$-weakly submodular with
\(\gamma=m_\kappa/M_\kappa>0\): for any disjoint $S,A\subseteq B_t$ with $|A|\le \kappa$,
\begin{align}
  \sum_{a\in A} \big(f(S\cup\{a\})-f(S)\big)
  \;\ge\; \gamma\,\big(f(S\cup A)-f(S)\big).
\end{align}
\end{theorem}
\begin{proof}
Let $w_S$ be the maximizer for $f(S)$ and $g=\nabla U_t(w_S)$. For any feasible $a\notin S$, optimizing the one-dimensional quadratic on the ray $\{\alpha e_a: \alpha\ge 0\}$ and using the nonnegativity constraint yields the \emph{gradient--marginal sandwich}
\begin{align}
  \frac{1}{2M_\kappa}\,(g_a^+)^2 \;\le\; f(S\cup\{a\})-f(S) \;\le\; \frac{1}{2m_\kappa}\,(g_a^+)^2.\label{eq:sandwich}
\end{align}
For a block $A$ (with $|A|\le \kappa$), $M_\kappa$-smoothness of $U_t$ upper-bounds the joint gain by optimizing a quadratic over $v\ge 0$ supported on $A$:
\begin{align}
  f(S\cup A)-f(S)
  &\le \max_{\substack{v\ge 0\\ \mathrm{supp}(v)\subseteq A}} \Big\{\langle g_A, v\rangle - \tfrac{M_\kappa}{2}\,\|v\|_2^2\Big\}
   = \frac{1}{2M_\kappa}\,\|g_A^+\|_2^2.\label{eq:block-upper}
\end{align}
Summing \eqref{eq:sandwich} over $a\in A$ gives
\(\sum_{a\in A} (f(S\cup\{a\})-f(S)) \ge \frac{1}{2M_\kappa}\,\|g_A^+\|_2^2\),
which combined with \eqref{eq:block-upper} yields the claim with $\gamma=m_\kappa/M_\kappa$.
\end{proof}

\begin{lemma}[Lipschitz gradient; restatement of Lemma~4.4]\label{lem:lipschitz}
For $U_t(w)=w^\top\mu-\tfrac12 w^\top K w$ with $K\succeq 0$, the gradient is $L$-Lipschitz with $L=\|K\|_{\mathrm{op}}$; i.e., for all $w_1,w_2$,
\(\|\nabla U_t(w_1)-\nabla U_t(w_2)\|_2\le \|K\|_{\mathrm{op}}\,\|w_1-w_2\|_2\).
\end{lemma}
\begin{proof}
Since $\nabla U_t(w)=\mu-Kw$, we have $\nabla U_t(w_1)-\nabla U_t(w_2)=-K(w_1-w_2)$ and the bound follows by the operator norm.
\end{proof}

\begin{proposition}[Positive semidefiniteness of $K_{\theta_t}$; restatement of Proposition~C.1]\label{prop:K-psd}
If $K_{\theta_t}=G_{\theta_t}^\top H G_{\theta_t}$ with $H\succeq 0$, then $K_{\theta_t}\succeq 0$.
\end{proposition}
\begin{proof}
Let $H^{1/2}$ be a matrix square root of $H$. Then $K_{\theta_t}=(H^{1/2}G_{\theta_t})^\top (H^{1/2}G_{\theta_t})\succeq 0$.
\end{proof}

\section{Algorithmic guarantee (Theorem~4.5)}

\begin{theorem}[Algorithm~2 with uniform sampling]\label{thm:alg2-uniform}
Let $S^\star$ be an optimal feasible set of size $\kappa$ under the partition matroid $\M_t$.
Run Algorithm~2 for $\kappa$ iterations as written: line~6 computes a maximum-weight base $M_i$ of the contraction $\M_t/S_{i-1}$ using weights $\omega_x=(\nabla U_t(w_{S_{i-1}}))_x^+$, and line~7 samples $x_i$ uniformly from $M_i$. Under Assumption~\ref{ass:psd},
\begin{align}
  \mathbb E\big[f(S_\kappa)\big] \;\ge\; \big(1-e^{-\gamma^2/\kappa}\big)\, f(S^\star), \qquad \gamma=\frac{m_\kappa}{M_\kappa}.
\end{align}
\end{theorem}
\begin{proof}
Fix iteration $i$ and write $S=S_{i-1}$, $w_S$ its refitted maximizer, and $g=\nabla U_t(w_S)$. Define $s_x=(g_x^+)^2$. By the matroid greedy property for modular weights, the maximum-weight base $M_i$ contains an element $\tilde x$ attaining $\max_x s_x$. Since line~7 samples uniformly from $M_i$ of size $k_i$ (the residual rank),
\begin{align}
  \mathbb E\big[s_{x_i}\mid S\big] \;\ge\; \tfrac1{k_i}\, s_{\tilde x}.\label{eq:exp-s}
\end{align}
By the left inequality of \eqref{eq:sandwich},
\(\mathbb E[\Delta(x_i\mid S)\mid S]\ge \frac{1}{2M_\kappa}\,\mathbb E[s_{x_i}\mid S]\).
By the right inequality of \eqref{eq:sandwich}, $\max_x \Delta(x\mid S)\le \frac{1}{2m_\kappa}\max_x s_x=\frac{1}{2m_\kappa}s_{\tilde x}$. Combining with \eqref{eq:exp-s},
\begin{align}
  \mathbb E\big[\Delta(x_i\mid S)\mid S\big]
  \;\ge\; \frac{m_\kappa}{M_\kappa}\cdot\frac{1}{k_i}\, \max_x \Delta(x\mid S)
  \;=\; \frac{\gamma}{k_i}\, \max_x \Delta(x\mid S).\label{eq:one-step-1}
\end{align}
Let $A\subseteq S^\star\setminus S$ be any feasible completion of size $k_i$. By weak submodularity (Theorem~\ref{thm:weak-submod}),
\(\sum_{a\in A}\Delta(a\mid S)\ge \gamma\big(f(S^\star)-f(S)\big)\), hence
\(\max_x \Delta(x\mid S)\ge \frac{\gamma}{k_i}\big(f(S^\star)-f(S)\big)\).
Plugging this into \eqref{eq:one-step-1} gives
\begin{align}
  \mathbb E\big[f(S_i)-f(S_{i-1})\mid S_{i-1}\big] \;\ge\; \frac{\gamma^2}{k_i^2}\,\big(f(S^\star)-f(S_{i-1})\big).
\end{align}
Writing $G_i=f(S^\star)-f(S_i)$ and using $k_i\le \kappa$ yields
\(\mathbb E[G_i]\le \big(1-\frac{\gamma^2}{\kappa^2}\big)\mathbb E[G_{i-1}]\), which telescopes over $i=1,\dots,\kappa$ to the stated bound.
\end{proof}

\begin{corollary}[Deterministic pick in line~7]\label{cor:deterministic}
If line~7 of Algorithm~2 selects $x_i=\arg\max_{x\in M_i}(\nabla U_t(w_{S_{i-1}}))_x^{+}$ instead of uniform sampling, then
\begin{align}
  f(S_\kappa) \;\ge\; \big(1-e^{-\gamma}\big)\, f(S^\star)
\end{align}
for a uniform (cardinality) matroid. For a general matroid, classic greedy attains a constant fraction of this factor; continuous-greedy plus rounding recovers $1-e^{-\gamma}$.
\end{corollary}
\begin{proof}
Deterministic choice removes the $1/k_i$ factor in \eqref{eq:one-step-1}, yielding the standard recursion with step size $\gamma$; the telescoping is classical.
\end{proof}

\section{Concluding remarks}
The results above make all assumptions explicit, normalize constants to the RSC/RSM pair $(m_\kappa,M_\kappa)$, and avoid undefined symbols. Weak submodularity follows directly from these constants, and the algorithmic guarantees align with Algorithm~2 both in its uniform-sampling form and a deterministic variant.

%% file: icml2026/macros.tex
\def \papertitle{Minibatch selection for Language Models\\ via Partition Matroid Constrained Gradient Matching}

\newcommand{\method}[1]{CoreSetPFedBayes}

\newcommand{\methodprop}{\textsc{PartitionSel}}

\makeatletter
\newtheoremstyle{boldplain}
  {\topsep}{\topsep}
  {\itshape}
  {}
  {\bfseries}
  {.}
  { }
  {}
\newtheoremstyle{bolddefn}
  {\topsep}{\topsep}
  {\normalfont}
  {}
  {\bfseries}
  {.}
  { }
  {}
\makeatother

\theoremstyle{boldplain}

\theoremstyle{bolddefn}

\DeclareMathAlphabet{\pazocal}{OMS}{zplm}{m}{n}

\def\BibTeX{{\rm B\kern-.05em{\sc i\kern-.025em b}\kern-.08em
    T\kern-.1667em\lower.7ex\hbox{E}\kern-.125emX}}

\newcommand{\inner}[1]{\left\langle#1\right\rangle}

\def\bzeta{\boldsymbol{\zeta}}

\def\A{\mathcal{A}}

\def\RR{\mathbb{R}}

\def\M{\mathcal{M}}
\def\L{\mathcal{L}}

\def\B{\mathcal{B}}
\def\U{\mathcal{U}}

\def \btheta{\boldsymbol{\theta}}

\def\bzero{{\mathbf 0}}

\def\bg{{\mathbf g}}

\def\bv{\mathbf v}
\def\bw{{\mathbf w}}
\def\bx{{\mathbf x}}
\def\by{{\mathbf y}}
\def\bz{{\mathbf z}}

\def\bG{\mathbf G}

\def\bK{\mathbf K}

\def\bmu{{\boldsymbol \mu}}
\def\bnu{{\boldsymbol \nu}}

\def\bm{\boldsymbol}

\def\argmax{\mathop{\rm arg\,max}\limits}

\def\maxop{\mathop{\rm max}\limits} 

\newcommand{\eat}[1]{}

\def\I{\mathcal{I}}
\def\M{\mathcal{M}}

\def\S{\mathcal{S}}

\def\V{\mathcal{V}}

%% file: icml2026/new_macros.tex
\definecolor{xblue}{HTML}{4169E1}
\definecolor{xbluelight}{HTML}{6287F5}
\definecolor{xbluedark}{HTML}{3C56A6}
\definecolor{xgreen}{HTML}{036C3A}
\definecolor{xpurple}{HTML}{9838B1}
\definecolor{xgray}{HTML}{808080}
\definecolor{xsienna}{HTML}{8B4512}
\definecolor{xslategray}{HTML}{70818F}
\definecolor{xred}{HTML}{FF2121}
\definecolor{xorange}{HTML}{FF8C00}
\definecolor{xcyan}{HTML}{06AEEF}
\definecolor{xlightcyan}{HTML}{CEEFFC}
\definecolor{xolive}{HTML}{556B2F}
\definecolor{xxorange}{HTML}{FF8200}
\definecolor{xxgreen}{HTML}{009F86}
\definecolor{xxpurple}{HTML}{623E99}

\DeclareMathAlphabet{\pazocal}{OMS}{zplm}{m}{n}

\def\BibTeX{{\rm B\kern-.05em{\sc i\kern-.025em b}\kern-.08em
    T\kern-.1667em\lower.7ex\hbox{E}\kern-.125emX}}

\def\bzeta{\boldsymbol{\zeta}}

\def\A{\mathcal{A}}

\def\RR{\mathbb{R}}

\def\M{\mathcal{M}}
\def\L{\mathcal{L}}

\def\B{\mathcal{B}}
\def\U{\mathcal{U}}

\def\bzero{{\mathbf 0}}

\def\bg{{\mathbf g}}

\def\bv{\mathbf v}
\def\bw{{\mathbf w}}
\def\bx{{\boldsymbol x}}
\def\by{{\mathbf y}}
\def\bz{{\boldsymbol z}}

\def\bG{\mathbf G}

\def\bK{\mathbf K}

\def\bmu{{\boldsymbol \mu}}

\def\btheta{\boldsymbol{\theta}}
\def\bDelta{\boldsymbol{\Delta}}

\def\argmax{\mathop{\rm arg\,max}\limits}

\def\maxop{\mathop{\rm max}\limits} 

\def\I{\mathcal{I}}
\def\M{\mathcal{M}}

\def\S{\mathcal{S}}

\def\V{\mathcal{V}}


%% file: icml2026/figures/qwen_pplx.tex
\providecommand{\plotwidth}{0.5pt}

\begin{tikzpicture}
    \begin{axis}[
        width=0.8\textwidth,
        height=0.6\textwidth,
        xlabel={Steps},
        ylabel={\texttt{log-pplx}},
        grid=both,
        grid style={dashed, gray!30},
        legend pos=north east,
        font=\small,
        label style={font=\small}, 
        tick label style={font=\small}, 
        title style={font=\small}, 
        legend style={font=\small} 
    ]
        \addplot+[mark=none, line width=\plotwidth] table [x index=0, y index=1] {icml2026/data/pplx_llama/facloc.dat};
        \addlegendentry{\texttt{COLM}}
        \addplot+[mark=none, line width=\plotwidth] table [x index=0, y index=1] {icml2026/data/pplx_llama/joint.dat};
        \addlegendentry{\textbf{$\methodprop$}}

        \addplot+[mark=none, line width=\plotwidth] table [x index=0, y index=1] {icml2026/data/pplx_llama/greatsID.dat};
        \addlegendentry{\texttt{ID}}

        \addplot+[mark=none, line width=\plotwidth] table [x index=0, y index=1] {icml2026/data/pplx_llama/greatsFULL.dat};
        \addlegendentry{\texttt{GREATS}}
    \end{axis}
\end{tikzpicture}

%% file: icml2026/figures/doremi_pplx.tex
\begin{tikzpicture}
    \begin{axis}[
        width=0.8\textwidth,
        height=0.6\textwidth,
        xlabel={Steps},
        ylabel={\texttt{log-pplx}},
        grid=both,
        grid style={dashed, gray!30},
        legend pos=north east,
        font=\small, 
        label style={font=\small}, 
        tick label style={font=\small}, 
        title style={font=\small},
        legend style={font=\small}
    ]
        \addplot+[mark=none, line width=1.5pt] table [x index=0, y index=1] {icml2026/data/doremi/doremi.dat};
        \addlegendentry{DoREMI}
        \addplot+[mark=none, line width=1.5pt] table [x index=0, y index=1] {icml2026/data/doremi/joint.dat};
        \addlegendentry{\textbf{$\methodprop$}}
    \end{axis}
\end{tikzpicture}

%% file: icml2026/figures/doremi_wts.tex
\begin{tikzpicture}
\begin{axis}[
    ybar,
    x=1.15cm,
    bar width=5pt,
    width=\linewidth,
    height=5.2cm,
    ymin=0, ymax=0.7,
    ylabel={Domain weights},
    symbolic x coords={
        GSM\_AnsAug,GSM\_FOBAR,GSM\_Rephrased,GSM\_SV,
        MATH\_AnsAug,MATH\_FOBAR,MATH\_Rephrased,MATH\_SV
    },
    xtick=data,
    tick label style={font=\normalsize},
    label style={font=\normalsize},
    xticklabel style={rotate=35, anchor=east, font=\normalsize},
    legend style={at={(0.01,0.99)},anchor=north west,draw=none,fill=none,font=\normalsize},
    ymajorgrids=true,
    grid style={draw=gray!30},
]
\addplot[fill=gray!55,draw=gray!80] coordinates {
    (GSM\_AnsAug,0.125)
    (GSM\_FOBAR,0.125)
    (GSM\_Rephrased,0.125)
    (GSM\_SV,0.125)
    (MATH\_AnsAug,0.125)
    (MATH\_FOBAR,0.125)
    (MATH\_Rephrased,0.125)
    (MATH\_SV,0.125)
};
\addlegendentry{Assigned (uniform)}

\addplot[fill=blue!55,draw=blue!80] coordinates {
    (GSM\_AnsAug,0.0084431)
    (GSM\_FOBAR,0.1765644)
    (GSM\_Rephrased,0.0168225)
    (GSM\_SV,0.0694684)
    (MATH\_AnsAug,0.0290119)
    (MATH\_FOBAR,0.0760804)
    (MATH\_Rephrased,0.6075143)
    (MATH\_SV,0.0160955)
};
\addlegendentry{Learned (DoReMi)}

\addplot[fill=red!60,draw=red!85] coordinates {
    (GSM\_AnsAug,0.2025316)
    (GSM\_FOBAR,0.1012658)
    (GSM\_Rephrased,0.2025316)
    (GSM\_SV,0.1012658)
    (MATH\_AnsAug,0.1898734)
    (MATH\_FOBAR,0.0379747)
    (MATH\_Rephrased,0.1265823)
    (MATH\_SV,0.0379747)
};
\addlegendentry{\textsc{PartitionSel}}
\end{axis}
\end{tikzpicture}

%% file: icml2026/figures/conflict.tex
\providecommand{\plotwidth}{1.0pt}
\providecommand{\methodprop}{Method}

\begin{tikzpicture}
\begin{axis}[
    width=\linewidth,
    height=0.75\linewidth,
    xlabel={Steps $\rightarrow$},
    ylabel={Gradient Conflicts\\$\inner{\bg_i,\bg_j}<0$},
    ylabel style={align=center},
    xmin=0,
    ymin=0,
    enlarge x limits=false,
    enlarge y limits=false,
    tick align=outside,
    scaled ticks=false,
    tick label style={font=\footnotesize},
    label style={font=\footnotesize},
    axis line style={black!75},
    tick style={black!60},
    grid=major,
    grid style={dashed, gray!18},
    legend style={
        font=\scriptsize,
        at={(0.995,0.02)},
        anchor=south east,
        draw=gray!25,
        fill=white,
        fill opacity=0.92,
        text opacity=1,
        inner xsep=4pt,
        inner ysep=2.5pt,
        row sep=1pt
    },
    legend cell align=left,
]

    \addplot[
        line width=\plotwidth,
        blue!70!black,
        smooth
    ]
    table [x index=0, y index=1] {icml2026/data/conflict_id.dat};
    \addlegendentry{\shortstack{\texttt{ID}}}

    \addplot[
        line width=\plotwidth,
        red!70!black,
        smooth
    ]
    table [x index=0, y index=1] {icml2026/data/conflict_joint.dat};
    \addlegendentry{$\methodprop$}

\end{axis}
\end{tikzpicture}

%% file: icml2026/figures/ablation_grads.tex
\begin{tikzpicture}
  \begin{axis}[
    width=\columnwidth,
    height=0.75\columnwidth,
    xlabel={Steps},
    ytick=\empty,
    ylabel={\texttt{log-pplx}},
    grid=both,
    grid style={dashed, gray!30},
    legend pos=south west,
    font=\footnotesize,
    label style={font=\footnotesize},
    tick label style={font=\footnotesize},
    title style={font=\footnotesize},
    legend style={
  font=\scriptsize,
  row sep=1pt,
  column sep=1pt,
  inner sep=1pt
},
  ]
    \addplot+[mark=none, line width=2pt] table [x index=0, y index=1] {icml2026/data/ablation_grads/grad1.dat};
    \addlegendentry{1-lyr,}
    \addplot+[mark=none, line width=2pt] table [x index=0, y index=1] {icml2026/data/ablation_grads/grads3.dat};
    \addlegendentry{3-lyr,}
  \end{axis}
\end{tikzpicture}

%% file: icml2026/figures/compare-models.tex
\begin{tikzpicture}
  \begin{axis}[
    ybar,
    bar width=7pt,
    width=0.8\linewidth,
    height=0.48\linewidth,
    ymin=0, ymax=80,
    ylabel={Accuracy (\%)},
    ymajorgrids,
    grid style={dashed,gray!30},
    axis line style={semithick},
    symbolic x coords={Qwen2.5-3B,Llama3-3B,Llama3-8B,Qwen2.5-7B},
    xtick=data,
    xticklabels={\textsc{Qwen2.5-3B},\textsc{Llama3-3B},\textsc{Llama3-8B},\textsc{Qwen2.5-7B}},
    xticklabel style={rotate=35, anchor=east, font=\scriptsize, text width=1.6cm, align=right},
    tick label style={font=\scriptsize},
    label style={font=\scriptsize},
    enlarge x limits=0.30,
    legend style={font=\scriptsize, draw=none, fill=none, at={(0.5,1.08)}, anchor=south, legend columns=2},
  ]

    \addplot[
      draw=black,
      fill=blue!50!white,
      bar shift=-4pt
    ] coordinates {
      (Qwen2.5-3B,  60)
      (Llama3-3B, 35)
      (Llama3-8B, 42)
      (Qwen2.5-7B,  53)
    };
    \addlegendentry{Random}

    \addplot[
      draw=black,
      fill=red!60!white,
      bar shift=+4pt
    ] coordinates {
      (Qwen2.5-3B,  63)
      (Llama3-3B, 39)
      (Llama3-8B, 46)
      (Qwen2.5-7B,  66)
    };
    \addlegendentry{$\methodprop$}

  \end{axis}
\end{tikzpicture}

%% file: icml2026/supplemental.tex
\newpage
\onecolumn
\allowdisplaybreaks
\par\noindent\rule{\textwidth}{1pt}\par
{\centering\large\bfseries Supplementary Material: \papertitle\par}
\noindent\rule{\textwidth}{0.4pt}\par
\setlength{\parskip}{1pt}

\colorlet{ResultRow}{blue!6}

\section{Organization of Appendix}
The Appendix is structured as follows. We dicuss some additional related works and background in Section \ref{Appendix:sec:related_works}. Section \ref{Appendix:TheoreticalResults} summarizes the main theoretical results. Section \ref{Appendix:algorithms} presents the detailed description of the proposed algorithm. Sections \ref{Appendix:implementationDetails} and \ref{Appendix:ExperimentalSetupDetails} outline the implementation aspects and the specifics of the experimental setup, respectively. Finally, Section \ref{Appendix:Code} provides a link to the publicly available codebase. 

\section{Background and additional related works} \label{Appendix:sec:related_works}

\subsection{Submodular maximization under matroid constraints}
Our analysis builds on a long line of work on (weakly) submodular maximization. The classical greedy algorithm achieves a $(1-1/e)$ approximation for monotone submodular maximization under cardinality \citep{nemhauser1978analysis} and a $1/2$ guarantee under matroid constraints, improved to $1-1/e$ by continuous-greedy / pipage rounding \citep{calinescu2011maximizing,fisher1978analysis}. \citet{das2018approximate} introduce the submodularity ratio for non-submodular functions, and \citet{elenberg2018restricted} relate restricted strong concavity to weak submodularity, yielding approximation guarantees for orthogonal matching pursuit \citep{bian2017guarantees,gurumoorthy2019efficient}. We leverage these tools to obtain provable guarantees for our partition-matroid-constrained, weakly submodular objective; to our knowledge, this is the first application of partition-matroid-constrained weakly submodular maximization to domain-aware LLM mini-batch selection.

\subsection{Approximate Submodularity}
We adopt the notion of \emph{approximate submodularity} characterized by the \emph{submodularity ratio} as shown in \citep{das2018approximate}. 
For a monotone set function $f$, the submodularity ratio with respect to a ground set $\V$, a reference set $\S \subseteq \V$, and a parameter $\kappa \ge 1$ is defined as
\[
\alpha_{\V,\kappa}(f)
=
\min_{\substack{\S \subseteq \V,\; \A:\,|\A|\le \kappa,\\ \A \cap \S = \varnothing}}
\frac{\sum_{u \in \A} \bigl(f(\S \cup \{u\}) - f(\S)\bigr)}
{f(\S \cup \A) - f(\S)},
\qquad \text{with } 0/0 := 1.
\]
The function $f(\cdot)$ is submodular if and only if $\alpha_{\V,\kappa}(f) \ge 1$. More generally, if
\[
\alpha \equiv
\frac{\sum_{u \in \A} \bigl(f(\S \cup \{u\}) - f(\S)\bigr)}
{f(\S \cup \A) - f(\S)}
> 0,
\]
but not necessarily $\alpha \ge 1$, then $f$ is said to be \emph{$\alpha$-weakly submodular}.

\subsection{Conflict-aware multi-task and multi-domain optimization}
Training on heterogeneous sources frequently produces conflicting per-task gradients, motivating methods that project, scale, or reweight updates to mitigate interference  \citep{yu2020gradient,liu2021conflict}. Whereas these approaches modify the \emph{optimizer} at the gradient level, $\methodprop$ addresses gradient conflict at the \emph{data-selection} level: its quadratic gradient-similarity term penalizes redundant or interfering samples within each mini-batch, which we observe empirically reduces the fraction of conflicting gradient pairs.

\subsection{High-quality instruction tuning and alignment data}
Finally, several studies show that careful curation of instruction-tuning data can yield strong performance even with comparatively small datasets \citep{zhou2023lima,yue2023mammoth}, motivating principled mixture and filtering strategies for supervised fine-tuning \citep{renduchintala2024smart,liu2024tsds} and for preference/alignment training \citep{deng2025less,tunstall2023zephyr}.

\section{Theoretical Results}\label{Appendix:sec:theoretical_results}

\label{Appendix:TheoreticalResults}

\MonotonicityLemma*

\begin{proof}
    Let $|\tilde{A}| = n_1$ and $|\tilde{B}| = n_2$ and since $\tilde{A} \subseteq \tilde{B}$ we have $n_1 < n_2$. We index the elements in $\tilde{B}$ such that the first $n_1$ elements are contained in $\tilde{A}$.
\begin{equation*}
    f_{\M}(\tilde{B}) = \maxop_{\bw\in\RR^{n_2}_{+}, supp(\bw)\subseteq\tilde{B}} \U_t(\bw) \geq \maxop_{\bw\in\RR^{n_1}_{+}, supp(\bw)\subseteq\tilde{A}} \U_t(\bw) = f_{\M}(\tilde{A}).
\end{equation*}
\end{proof}

\LipschitzGradientLemma*

\begin{proof}

Since $\U_t(\bw) = \bw^{\mathsf{T}}\bmu_{\btheta_t} - \tfrac{1}{2}\bw^{\mathsf{T}}\bK_{\btheta_t}\bw$,
differentiating with respect to $\bw$ gives
$\nabla \U_t(\bw) = \bmu_{\btheta_t} - \bK_{\btheta_t}\bw$.

Given a batch $\B_t$, for any $\bw, \bw' \in \mathbb{R}^{|\B_t|}$, by the definition of the operator norm,
\[
  \lVert \nabla \U_t(\bw) - \nabla \U_t(\bw') \rVert_2
  = \lVert \bK_{\btheta_t}(\bw - \bw') \rVert_2
  \;\leq\;
  \lVert \bK_{\btheta_t} \rVert_{\mathrm{op}}\,
  \lVert \bw - \bw' \rVert_2,
\]
so $\nabla \U_t$ is Lipschitz continuous with constant $\L = \lVert \bK_{\btheta_t} \rVert_{\mathrm{op}}$.
 
Since $\bK_{\btheta_t} \succ \bzero$ is symmetric positive definite it admits a factorisation
$\bK_{\btheta_t} = \bG_{\btheta_t}\bG_{\btheta_t}^{\top}$, and because the operator norm
satisfies $\lVert \mathbf{A} \rVert_{\mathrm{op}} = \sup_{\lVert \mathbf{v} \rVert_2 = 1}\lVert \mathbf{A}\mathbf{v} \rVert_2$
for any matrix $\mathbf{A}$, we have directly
$\lVert \bK_{\btheta_t} \rVert_{\mathrm{op}} = \lVert \bG_{\btheta_t}\bG_{\btheta_t}^{\top} \rVert_{\mathrm{op}}$.
 
The constant $\L = \lVert \bK_{\btheta_t} \rVert_{\mathrm{op}}$ is sharp: taking $\bw - \bw'$
to be the leading eigenvector $\mathbf{v}_{\max}$ of $\bK_{\btheta_t}$ (with largest eigenvalue
$\lambda_{\max}$) yields
$\lVert \nabla \U_t(\bw) - \nabla \U_t(\bw') \rVert_2
 = \lambda_{\max}\,\lVert \bw - \bw' \rVert_2
 = \lVert \bK_{\btheta_t} \rVert_{\mathrm{op}}\,\lVert \bw - \bw' \rVert_2$,
confirming that no smaller constant suffices.
\end{proof}

\FiniteRSCandRSM*

\begin{proof}
 The function $\U_t(\bw) = \bw^\top \bm{\mu_{\btheta_t}} - \frac{1}{2} \bw^\top \bK_{\btheta_t} \bw$ is concave. 
 We calculate 
 $\U_t(\bw_1) - \U_t(\bw_2) - \nabla\inner{\U_t(\bw_2),\bw_1 - \bw_2} = -0.5(\bw_1 - \bw_2)^T \bK_{\btheta_t} (\bw_1 - \bw_2)$. Since $\bw$ is $k_1+k_2$ sparse, we consider $\bw_1$ as $k_1^1 + k_2^1$ and $\bw_2$ as $k_1^2 + k_2^2$ sparse vectors respectively. Now, $\Delta \bw = \bw_1 - \bw_2$ has a maximum of $k \leq k_1^1 + k_2^1 +k_1^2 + k_2^2$ non-zero entries. For the constants $b$ and $B$ satisfying $-b\|\Delta\bw\|^2 \geq -\Delta\bw^T\bK_{\btheta_t}\Delta\bw \geq -B\|\Delta\bw\|^2$, we have $b$ $\geq$ $k$-sparse smallest eigenvalue of $\bK_{\btheta_t}$ and $B \leq k$-sparse largest eigenvalue of $\bK_{\btheta_t}$.
\end{proof}

\weakSubmod*
\begin{proof}
Fix arbitrary \emph{disjoint} index sets $\mathcal{S},\mathcal{A}\subseteq[\,|\B_t|\,]$ and let
$\bar{\kappa} = |\mathcal{S}| + |\mathcal{A}| = |\mathcal{S}\cup\mathcal{A}|$.
For an index set $\mathcal{T}$, let
$\bzeta^{(\mathcal{T})} \coloneqq \arg\max_{\bw\ge\bzero,\,\mathsf{supp}(\bw)\subseteq\mathcal{T}}\U_t(\bw) \in \RR^{|\B_t|}_{+}$
denote the inner maximizer (cf.\ \eqref{eq:fM}), so that $f_{\M}(\mathcal{T}) = \U_t(\bzeta^{(\mathcal{T})})$.
We bound the submodularity ratio
\[
\gamma_{\mathcal{S},\mathcal{A}}
\;=\;
\frac{\sum_{j\in\mathcal{A}}\bigl[f_{\M}(\mathcal{S}\cup\{j\}) - f_{\M}(\mathcal{S})\bigr]}
     {f_{\M}(\mathcal{S}\cup\mathcal{A}) - f_{\M}(\mathcal{S})}
\]
from below by lower-bounding the numerator and upper-bounding the denominator.

\textbf{Upper bound on the denominator.}
By restricted strong concavity (Section~\ref{sec:RSCRSM}) applied to $\bx=\bzeta^{(\mathcal{S})}$ and $\by=\bzeta^{(\mathcal{S}\cup\mathcal{A})}$ (whose difference is supported on $\mathcal{S}\cup\mathcal{A}$, hence $\bar{\kappa}$-sparse),
\begin{equation}
\frac{c_{\bar{\kappa}}}{2}\bigl\|\bzeta^{(\mathcal{S} \cup \mathcal{A})} - \bzeta^{(\mathcal{S})}\bigr\|^2
\;\leq\;
\mathcal{U}_t(\bzeta^{(\mathcal{S})}) - \mathcal{U}_t(\bzeta^{(\mathcal{S} \cup \mathcal{A})})
+ \bigl\langle \nabla \mathcal{U}_t(\bzeta^{(\mathcal{S})}),\, \bzeta^{(\mathcal{S} \cup \mathcal{A})} - \bzeta^{(\mathcal{S})} \bigr\rangle.
\end{equation}
Rearranging and upper-bounding the right-hand side by maximizing over all feasible $\bv$ supported on $\mathcal{S}\cup\mathcal{A}$ (legitimate, since $\bzeta^{(\mathcal{S}\cup\mathcal{A})}$ is itself such a feasible point),
\begin{equation}
\begin{aligned}
    \mathcal{U}_t(\bzeta^{(\mathcal{S} \cup \mathcal{A})}) - \mathcal{U}_t(\bzeta^{(\mathcal{S})})
    &\leq \bigl\langle \nabla \mathcal{U}_t(\bzeta^{(\mathcal{S})}),\, \bzeta^{(\mathcal{S} \cup \mathcal{A})} - \bzeta^{(\mathcal{S})} \bigr\rangle - \frac{c_{\bar{\kappa}}}{2}\bigl\|\bzeta^{(\mathcal{S} \cup \mathcal{A})} - \bzeta^{(\mathcal{S})}\bigr\|^2 \\
    &\leq \max_{\bv:\, \bv_{(\mathcal{S} \cup \mathcal{A})^c} = \bzero,\; \bv \geq \bzero}
    \bigl\langle \nabla \mathcal{U}_t(\bzeta^{(\mathcal{S})}),\, \bv - \bzeta^{(\mathcal{S})} \bigr\rangle
    - \frac{c_{\bar{\kappa}}}{2}\bigl\|\bv - \bzeta^{(\mathcal{S})}\bigr\|^2.
\end{aligned}
\end{equation}
The objective is separable and concave; its maximizer over $\bv\ge\bzero$ supported on $\mathcal{S}\cup\mathcal{A}$ is
\begin{equation}
    \bv_{\mathcal{S}\cup\mathcal{A}} = \max\!\left\{ \tfrac{1}{c_{\bar{\kappa}}}\nabla \mathcal{U}_{t,\,\mathcal{S} \cup \mathcal{A}}(\bzeta^{(\mathcal{S})}) + \bzeta_{\mathcal{S} \cup \mathcal{A}}^{(\mathcal{S})},\; \bzero \right\},
\end{equation}
so that
\begin{equation}
    (\bv - \bzeta^{(\mathcal{S})})_{\mathcal{S} \cup \mathcal{A}} = \max\!\left\{ \tfrac{1}{c_{\bar{\kappa}}}\nabla \mathcal{U}_{t,\,\mathcal{S} \cup \mathcal{A}}(\bzeta^{(\mathcal{S})}),\; -\bzeta_{\mathcal{S} \cup \mathcal{A}}^{(\mathcal{S})} \right\}.
\end{equation}
The KKT conditions for $f_{\M}(\mathcal{S})=\max_{\bw\ge\bzero,\,\mathsf{supp}(\bw)\subseteq\mathcal{S}}\U_t(\bw)$ at $\bzeta^{(\mathcal{S})}$ require, for all $j \in \mathcal{S}$,
\begin{equation}
    \begin{cases}
        \bzeta_j^{(\mathcal{S})} > 0 \implies \nabla \mathcal{U}_{t,j}(\bzeta^{(\mathcal{S})}) = 0, \\[4pt]
        \bzeta_j^{(\mathcal{S})} = 0 \implies \nabla \mathcal{U}_{t,j}(\bzeta^{(\mathcal{S})}) \leq 0,
    \end{cases}
\end{equation}
and hence $(\bv - \bzeta^{(\mathcal{S})})_j = 0$ for all $j \in \mathcal{S}$.
For $j \in \mathcal{A}$ we have $\bzeta^{(\mathcal{S})}_j = 0$ (as $\mathcal{A}\cap\mathcal{S}=\emptyset$), so
$(\bv - \bzeta^{(\mathcal{S})})_j = \max\!\left\{\tfrac{1}{c_{\bar{\kappa}}} \nabla \mathcal{U}_{t,j}(\bzeta^{(\mathcal{S})}),\; 0\right\}$.
Writing $\nabla \mathcal{U}^{+}_{t,\,\mathcal{A}}(\bzeta^{(\mathcal{S})}) = \max\!\left\{\nabla \mathcal{U}_{t,\,\mathcal{A}}(\bzeta^{(\mathcal{S})}),\; \bzero\right\}$ and substituting the maximizer,
\begin{equation}
\label{eq:weaksubmod-denominator}
    0 \;\leq\; \mathcal{U}_t(\bzeta^{(\mathcal{S} \cup \mathcal{A})}) - \mathcal{U}_t(\bzeta^{(\mathcal{S})}) \;\leq\; \frac{1}{2c_{\bar{\kappa}}}\bigl\|\nabla \mathcal{U}^{+}_{t,\,\mathcal{A}}(\bzeta^{(\mathcal{S})})\bigr\|^2,
\end{equation}
where the left inequality is the monotonicity of $f_{\M}$ (Lemma~\ref{lem:monotonicity}).

\textbf{Lower bound on the numerator.}
Fix $j \in \mathcal{A}$. If $\nabla \mathcal{U}_{t,j}(\bzeta^{(\mathcal{S})}) \le 0$, then $f_{\M}(\mathcal{S} \cup \{j\}) = f_{\M}(\mathcal{S})$ -- monotonicity (Lemma~\ref{lem:monotonicity}) gives ``$\ge$'', while the optimality of $\bzeta^{(\mathcal{S})}$ with $\nabla\mathcal{U}_{t,j}\le0$ gives ``$\le$'' -- so such coordinates contribute zero to both the numerator and to $\|\nabla\mathcal{U}^{+}_{t,\mathcal{A}}\|^2$, and we may restrict to $j$ with $\nabla \mathcal{U}_{t,j}(\bzeta^{(\mathcal{S})}) > 0$.
Let $\mathbf{1}^{(\{j\})}$ be the $j$-th standard basis vector, and for $\alpha \geq 0$ set $\mathbf{y}^{(\{j\})} = \bzeta^{(\mathcal{S})} + \alpha\,\mathbf{1}^{(\{j\})}$, so that $\bigl(\bzeta^{(\mathcal{S})},\, \mathbf{y}^{(\{j\})}\bigr) \in \widetilde{\Omega}$. Since $\mathbf{y}^{(\{j\})}$ is feasible for $f_{\M}(\mathcal{S} \cup \{j\})$, restricted smoothness (Section~\ref{sec:RSCRSM}) on $\widetilde{\Omega}$ yields
\[
\mathcal{U}_t(\bzeta^{(\mathcal{S} \cup \{j\})}) - \mathcal{U}_t(\bzeta^{(\mathcal{S})})
\;\geq\; \mathcal{U}_t(\mathbf{y}^{(\{j\})}) - \mathcal{U}_t(\bzeta^{(\mathcal{S})})
\;\geq\; \bigl\langle \nabla\mathcal{U}_t(\bzeta^{(\mathcal{S})}),\, \alpha\,\mathbf{1}^{(\{j\})} \bigr\rangle - \frac{\widetilde{C}_1}{2}\alpha^2.
\]
Maximizing the right-hand side over $\alpha$ at $\alpha = \nabla \mathcal{U}_{t,j}(\bzeta^{(\mathcal{S})})/\widetilde{C}_1 \geq 0$ gives
\[
f_{\M}(\mathcal{S} \cup \{j\}) - f_{\M}(\mathcal{S})
\;\geq\; \frac{1}{2\widetilde{C}_1}\bigl(\nabla \mathcal{U}_{t,j}(\bzeta^{(\mathcal{S})})\bigr)^{2},
\]
and summing over $j \in \mathcal{A}$,
\begin{equation}
\label{eq:weaksubmod-numerator}
\sum_{j \in \mathcal{A}}\bigl[f_{\M}(\mathcal{S} \cup \{j\}) - f_{\M}(\mathcal{S})\bigr]
\;\geq\; \frac{1}{2\widetilde{C}_1}\bigl\|\nabla \mathcal{U}^{+}_{t,\,\mathcal{A}}(\bzeta^{(\mathcal{S})})\bigr\|^2.
\end{equation}

\textbf{Combining.}
Dividing \eqref{eq:weaksubmod-numerator} by \eqref{eq:weaksubmod-denominator}, the common factor $\bigl\|\nabla \mathcal{U}^{+}_{t,\,\mathcal{A}}(\bzeta^{(\mathcal{S})})\bigr\|^2$ cancels and
\[
\gamma_{\mathcal{S},\mathcal{A}} \;\geq\; \frac{c_{\bar{\kappa}}}{\widetilde{C}_1}.
\]
For $|\mathcal{S}|,|\mathcal{A}|\le\kappa$ we have $\bar{\kappa}\le 2\kappa$, hence $\Omega_{\bar{\kappa}}\subseteq\Omega_{2\kappa}$ and $c_{\bar{\kappa}}\ge c_{2\kappa}$ (Section~\ref{sec:background}). Therefore
\[
\gamma \;=\; \min_{\mathcal{S},\mathcal{A}}\gamma_{\mathcal{S},\mathcal{A}} \;\geq\; \frac{c_{2\kappa}}{\widetilde{C}_1} \;>\; 0,
\]
where the strict positivity follows since $\bK_{\btheta_t}\succ\bzero$ ensures $c_{2\kappa}>0$ (Lemma~\ref{lem:finite-rsc-rsm}). This proves the claim.
\end{proof}

\section{Dataset Details}
\subsection{Molecular Instruction Datasets}\label{Appendix: Mol-Instructions Dataset Details}

The Molecular Instructions dataset \citep{fang2023mol} is a large-scale biomolecular instruction dataset for large language models designed to enhance a model's performance in various realms of biomolecular studies. It consists of three key components: (i) \textit{molecule-oriented instructions}, (ii) \textit{protein-oriented instructions} and \textit{biomolecular text instructions}.%
\begin{wrapfigure}[14]{r}{0.56\textwidth}
    \vspace{-0.8em}
    \centering
    \small
    \subfigure[\textsc{Mol-Instructions} \label{fig:mol_inst_dist}]{\includegraphics[width=0.48\linewidth]{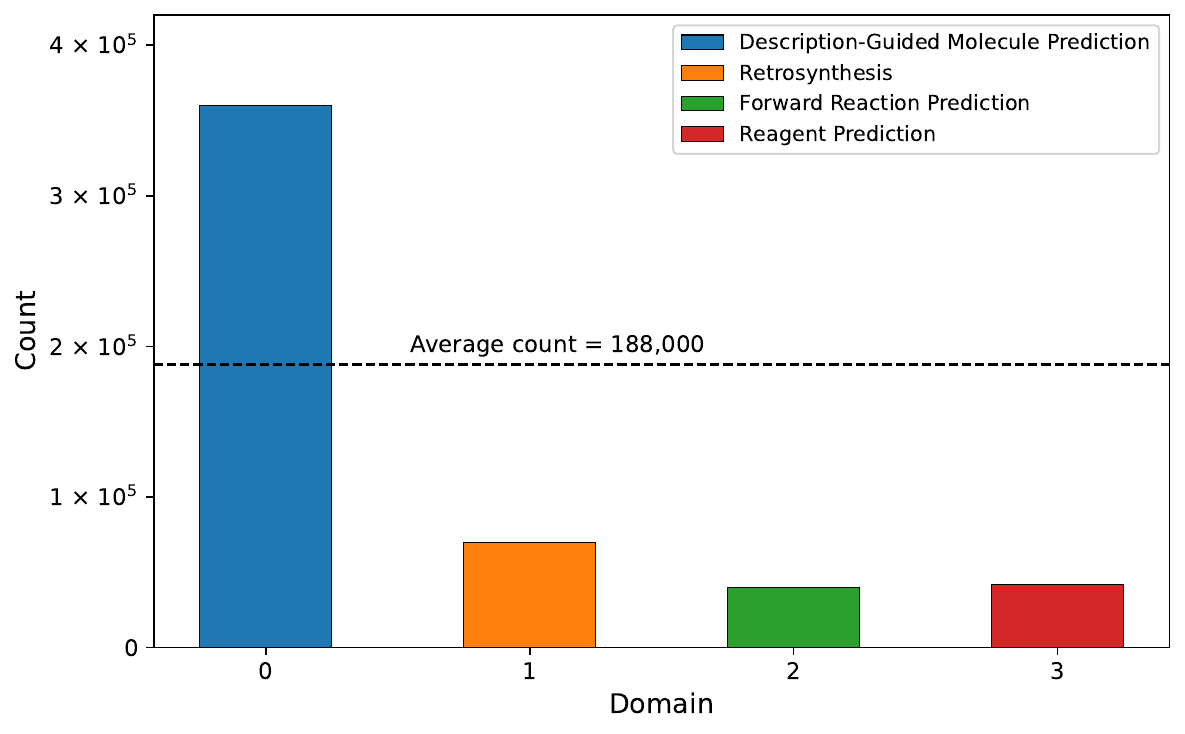}}\hspace{0.6em}%
    \subfigure[\textsc{MetaMathQA}\label{fig:metamathqa_dist}]{\includegraphics[width=0.48\linewidth]{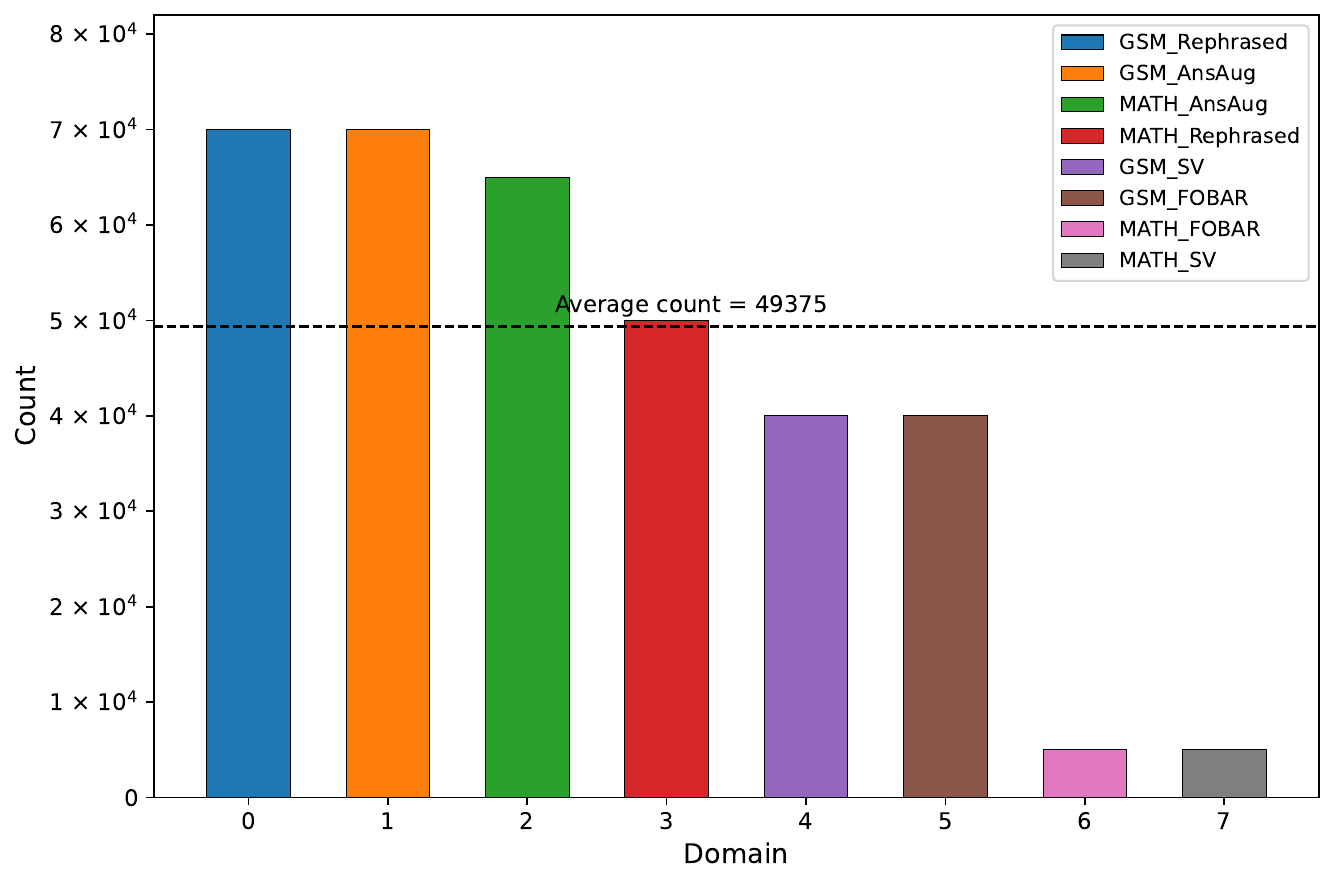}}
    \vspace{-0.4em}
    \caption{Domain-wise sample distributions in \textsc{Mol-Instructions} and \textsc{MetaMathQA}, highlighting substantial domain imbalance in the number of instances contributed by different domains/tasks.}
    \label{fig:dataset_dists}
\end{wrapfigure}
\textbf{Molecular-oriented instructions} consists of 148.4K instructions across six tasks,
including
\textit{description-guided molecule prediction},
\textit{retrosynthesis},
\textit{forward reaction prediction} and
\textit{reagent prediction}.

The dataset distribution across various categories is shown in Figure~\ref{fig:mol_inst_dist}. A detailed example of this specific section of the dataset is given in ~\ref{box:mol_instructions}. 

\par\smallskip
\noindent
\textbf{Protein-oriented instructions} focus on predicting the the structure, function, and activity of proteins, and facilitate protein design based on textual directives. It consists of 505K instructions spanning five categories of tasks. 

\par\smallskip
\noindent
\textbf{Biomolecular text instructions} focus on bioinformatics and chemoinformatics, covering six information extraction and Q\&A tasks represented through 53K instructions. 

\subsubsection{Detailed Description for Molecular-Oriented Instructions}

Table \ref{table:mol} shows the results obtained for the experiments carried out using the Mol Instructions datasets. 

\tcbset{
  colback=gray!5,
  colframe=black!30,
  boxrule=0.5pt,
  breakable,
  enhanced,
  width=\linewidth,
  left=6pt,right=6pt,top=6pt,bottom=6pt, 
  sharp corners
}
\begin{tcolorbox}[
  title={Molecular Instruction Examples},
  label={box:mol_instructions},
  breakable,
  enhanced,
]

\tcbsubtitle{Description-guided molecule design}
\textbf{Input:} Create a molecule with the structure as the one described.
The molecule is a primary arylamine in which an amino functional group is substituted for one of the benzene hydrogens. It is a primary arylamine and a member of anilines.

\textbf{Output:}
\begin{Verbatim}[breaklines, breakanywhere, fontsize=\small]
[N][C][=C][C][=C][C][=C][Ring1][=Branch1]
\end{Verbatim}

\tcbsubtitle{Forward reaction prediction}
\textbf{Input:} With the provided reactants and reagents, propose a potential product.
\begin{Verbatim}[breaklines, breakanywhere, fontsize=\small]
[O][=N+1][Branch1][C][O-1][C][=C][N][=C][Branch1][C][Cl][C][Branch1][C][I][=C][Ring1][Branch2].[Fe]
\end{Verbatim}

\textbf{Output:}
\begin{Verbatim}[breaklines, breakanywhere, fontsize=\small]
[N][C][=C][N][=C][Branch1][C][Cl][C][Branch1][C][I][=C][Ring1][Branch2]
\end{Verbatim}

\tcbsubtitle{Retrosynthesis}
\textbf{Input:} Please suggest potential reactants used in the synthesis of the provided product.
\begin{Verbatim}[breaklines, breakanywhere, fontsize=\small]
[C][=C][C][C][N][C][=Branch1][C][=O][O][C][Branch1][C][C][Branch1][C][C][C]
\end{Verbatim}

\textbf{Output:}
\begin{Verbatim}[breaklines, breakanywhere, fontsize=\small]
[C][=C][C][C][N].[C][C][Branch1][C][C][Branch1][C][C][O][C][=Branch1][C][=O][O][C][=Branch1][C][=O][O][C][Branch1][C][C][Branch1][C][C][C]
\end{Verbatim}

\tcbsubtitle{Reagent prediction}
\textbf{Input:} Please provide possible reagents based on the following chemical reaction.
\begin{Verbatim}[breaklines, breakanywhere, fontsize=\small]
[C][C][=C][C][=C][Branch1][C][N][C][=N][Ring1][#Branch1].[O][=C][Branch1][C][Cl][C][Cl]>>[C][C][=C][C][=C][Branch1][Branch2][N][C][=Branch1][C][=O][C][Cl][C][=N][Ring1][O]
\end{Verbatim}

\textbf{Output:}
\begin{Verbatim}[breaklines, breakanywhere, fontsize=\small]
[C][C][C][O][C][Ring1][Branch1].[C][C][N][Branch1][Ring1][C][C][C][C].[O]
\end{Verbatim}

\end{tcolorbox}

\subsection{Reasoning Datasets}\label{Appendix: Standard Finetuning Datasets Details}

The \textsc{MetaMathQA} dataset~\citep{yu2023metamath} is a large-scale mathematical question--answering dataset designed to train and evaluate large language models on mathematical reasoning. It comprises a wide range of mathematical domains, including algebra, calculus, and number theory. Each data instance contains a query (the mathematical problem) and a detailed step-by-step solution. The dataset distribution can be seen in Figure~\ref{fig:metamathqa_dist}. In Table~\ref{table:main}, we show results for experiments in which the model is trained on \textsc{MetaMathQA} and evaluated on various mathematical reasoning benchmarks described below.


\subsection{Evaluation Datasets}
\paragraph{NumGLUE}
The NumGLUE dataset \citep{mishra2022numglue} is a multi-task benchmark designed to evaluate the performance of AI systems on eight tasks that fundamentally require simple arithmetic understanding. The benchmark includes both newly created and previously existing arithmetic reasoning tasks, totaling approximately 100k questions. The tasks are intentionally imbalanced, for example, Task 1 contains around 400 examples, while Task 5 includes about 50k examples, reflecting real-world distributions in which arithmetic word problems are far more common than those requiring additional common-sense reasoning in addition to arithmetic reasoning.

\paragraph{MATH}
The MATH dataset \citep{hendrycks2021measuring} consists of 12,500 challenging competition-level mathematics problems. Each problem has a full step-by-step solution, enabling the training and evaluation of models that generate answer derivations and explanations. 

The dataset spans seven subjects, namely Prealgebra, Algebra, Number Theory, Counting and Probability, Geometry, Intermediate Algebra, and Precalculus. Each subject comprises of problems from different difficulty level, ranging from level 1 to level 5.

\paragraph{GSM8K}
The GSM8K dataset \citep{cobbe2021training} consists of 8.5K high-quality linguistically diverse grade-school math word problems which are designed to exhibit substantial linguistic variation while relying only on relatively simple grade-school mathematical concepts. The dataset is divided into 7.5K training examples and 1K test examples, with each problem typically requiring between 2 and 8 reasoning steps to solve. 

\paragraph{SVAMP}
The SVAMP dataset \citep{patel2021nlp} (Simple Variations on Arithmetic Math Word Problems) contains one‑unknown arithmetic word problems up to approximately grade‑4 level, designed by applying simple and controlled variations to problems in ASDiv \citep{miao2021diversecorpusevaluatingdeveloping} to highlight the brittleness of existing models when trained on standard math word‑problem benchmarks.

SVAMP introduces several categories of variations, including:  
\emph{(i) Question Sensitivity} i.e. same object with different structure, different object with same structure, or different object with a different structure;  
\emph{(ii) Reasoning Ability} i.e. modifying relevant information, changing information, or inverting operations; and  
\emph{(iii) Structural Invariance} i.e. adding irrelevant information, changing the order of objects, or changing the order of phrases.

\paragraph{SimulEq}
SimulEq is a subset of MAWPS \citep{koncel2016mawps}, an online repository of math word problems designed to support research in automated math word‑problem solving. SimulEq contains the highest‑complexity equations among all the MAWPS subsets, including single equation and multi equation word problems. 


\paragraph{MMLU-Math} 
For our experiments, we use the mathematics-related subsets of the MMLU benchmark \citep{hendrycks2021measuringmassivemultitasklanguage}. We focus on four tasks that fall under the Elementary Mathematics category: \emph{formal logic}, \emph{high school mathematics}, \emph{abstract algebra}, and \emph{college mathematics}. 

\paragraph{AQuA}
The AQuA dataset \citep{ling2017programinductionrationalegeneration} consists of 100K multiple-choice math word problems in which each question is decomposed into four parts: the problem description (the \emph{question}), the set of multiple‑choice answer options (the \emph{options}), the step-by-step explanation used to reach the correct answer (the \emph{rationale}), and the correct option label. 

\section{Implementation Details}
\label{Appendix:implementationDetails}
\subsection{Hardware and License}
All models are implemented in \texttt{Python 3.10} using \texttt{JAX}. All language model training were performed on servers with NVIDIA A100 GPUs.

\subsection{Finetuning experiments}
We use JAX \citep{jax2018github} for all our finetuning experiments. We use adam \citep{adam2014method} with learning rate of 2e-5, with $\hat{S_t}=128$. We directly use raw lora gradients for constructing similarity matrices for CoLM, GREATS, $\methodprop$. For \textsc{GREATS}, \textsc{ID} and $\methodprop$ we randomly sample 2-random points from val-set as anchors at every train step.

\subsection{Memory Efficient Gradient Computation}
\label{Appendix:memEffGrad}
Since computing per-example gradients is prohibitively expensive, we rely on the last layer gradients of the LoRA params for computing the $\bar{\bmu}_{\btheta_t}$ and $\bar{\bK}_{\btheta_t}$. Moreover, inspired by \citep{nguyen2024mini}, we apply a adam-like lowpass filter on the raw gradients directly instead of their SPSA \citep{119632} proxy.

Concretely, let $\tilde{\boldsymbol{g}}_{\boldsymbol{\theta}_t}(\mathbf{z}_i)$ denote the raw last-layer LoRA gradient for example $\mathbf{z}_i$ at step $t$. We maintain exponentially smoothed first and second moments (bias-terms omitted for brevity)
\begin{equation*}
\begin{aligned}
  \boldsymbol{m}_t(\mathbf{z}_i)
    &= \beta_1 \,\boldsymbol{m}_{t-1}(\mathbf{z}_i)
       + (1-\beta_1)\,\tilde{\boldsymbol{g}}_{\boldsymbol{\theta}_t}(\mathbf{z}_i), \\
  \boldsymbol{v}_t(\mathbf{z}_i)
    &= \beta_2 \,\boldsymbol{v}_{t-1}(\mathbf{z}_i)
       + (1-\beta_2)\,
          \tilde{\boldsymbol{g}}_{\boldsymbol{\theta}_t}^2(\mathbf{z}_i).
\end{aligned}
\end{equation*}
We then define the filtered gradient feature used in the similarity matrices as \(\boldsymbol{g}_{\boldsymbol{\theta}_t}(\mathbf{z}_i) = \frac{\boldsymbol{m}_t(\mathbf{z}_i)}
{\sqrt{\boldsymbol{v}_t(\mathbf{z}_i)} + \varepsilon}.
\)
Finally, we apply a randomized FFT projection \citep{10.1145/1132516.1132597} to obtain a compressed features for constructing $\bar{\bmu}_{\btheta_t}$ and $\bar{\bK}_{\btheta_t}$, which reduces memory movement to improve efficiency.

\subsection{Details of baselines}
\paragraph{GREATS \citep{wang2024greats}.}
GREATS formulates online batch selection as optimizing a set utility that measures the single-step reduction in validation loss under a gradient-descent update. Let $\boldsymbol{\theta}_t$ be the current parameters, $\mathcal{B}_t$ a candidate batch, and $\mathcal{S} \subseteq \mathcal{B}_t$ a subset of size $\kappa$. The ideal utility at iteration $t$ is
\begingroup
\setlength{\abovedisplayskip}{3pt}
\setlength{\belowdisplayskip}{3pt}
\begin{equation}
\mathcal{U}_t(\mathcal{S}) \;:=\; \ell(\mathbf{z}_{\mathsf{val}}; \boldsymbol{\theta}_{t}) 
\; -\; \ell\!\left(\mathbf{z}_{\mathsf{val}}; \tilde{\boldsymbol{\theta}}_{t+1}(\mathcal{S})\right).
\end{equation}
\endgroup
and selection solves $\arg\max_{\mathcal{S} \subseteq \mathcal{B}_t,\,|\mathcal{S}|=\kappa} \mathcal{U}_t(\mathcal{S})$.
Since exact evaluation is intractable, GREATS applies a Taylor approximation to obtain a closed-form surrogate for the marginal gain of adding a training point $\mathbf{z}_i$:
\[
\begin{aligned}
\bDelta \mathcal{U}_{t}(\mathbf{z}_{i}\mid \mathcal{S})
&= \ell(\mathbf{z}_{\mathsf{val}}; \tilde{\boldsymbol{\theta}}_{t+1}(\mathcal{S}))
 - \ell(\mathbf{z}_{\mathsf{val}}; \tilde{\boldsymbol{\theta}}_{t+1}(\mathcal{S}\cup \{\mathbf{z}_{i}\}))\\
&\approx \eta_{t} \,\inner{\boldsymbol{g}_{\boldsymbol{\theta}_t}(\mathbf{z}_i),\, \boldsymbol{g}_{\boldsymbol{\theta}_{t}}(\mathbf{z}_{\mathsf{val}})}
 - \eta^2_{t}\,\inner{\boldsymbol{g}_{\boldsymbol{\theta}_t}(\mathbf{z}_i),\,\boldsymbol{\mathcal{H}}_{\mathbf{z}_{\mathsf{val}}}(\boldsymbol{\theta}_t)(\textstyle\sum_{\mathbf{z} \in \mathcal{S}}\boldsymbol{g}_{\boldsymbol{\theta}_{t}}(\mathbf{z}))}.
\end{aligned}
\]
In practice, GREATS further approximates $\boldsymbol{\mathcal{H}}_{\mathbf{z}_{\mathsf{val}}}(\boldsymbol{\theta}_t) \approx \mathbf{I}$, yielding a gradient inner-product scoring with a correction term. A greedy procedure iteratively adds the point with largest approximate marginal gain until $k$ points are selected. To avoid materializing per-example model-sized gradients, GREATS computes all required gradient inner-products in a single backpropagation via a “ghost inner-product” reparameterization that expresses layerwise gradient inner-products using already-available activations and output gradients, and merges selection with the update without extra passes.

\paragraph{CoLM \citep{nguyen2024mini}.}
CoLM casts mini-batch construction as coreset selection in gradient space for memory-efficient fine-tuning. CoLM first addresses imbalance by including \emph{all} examples from “small” sources (those with insufficient sample count in the large batch), while selecting representatives (medoids) from each “big” source. To align selection with Adam, per-example gradients are normalized by the optimizer’s exponential-moving-average statistics, yielding normalized directions proportional to $\boldsymbol{m}_t/(\epsilon+\sqrt{\boldsymbol{v}_t})$. To reduce dimensionality and denoise, CoLM estimates the gradient of the \emph{last $V$-projection} parameters (e.g., LoRA $V$) using a zeroth-order SPSA estimator with two perturbed forward passes and precached penultimate activations, then sparsifies by keeping the coordinates with largest normalized magnitudes. Within each big source, a greedy medoid selection is performed in the projected, sparsified, Adam-normalized gradient space so that the aggregated coreset gradient approximates that of the full large batch; the final mini-batch is the union of all small-source examples and the selected big-source medoids.


\paragraph{GradNorm \citep{katharopoulos2018not}}
Given a large batch $\mathcal{B}$, compute per-example gradient features and rank by norm. For parameters $\theta$ and loss $\ell_i=\ell(f_{\btheta}(\bx_i),y_i)$, define raw gradient $\boldsymbol{g}_i=\nabla_{\btheta} \ell_i$. To align with Adam, each coordinate is normalized as \(\Tilde{\boldsymbol{g}_i} = \frac{\boldsymbol{m}_t}{\sqrt{\boldsymbol{v}_t}+\epsilon}\odot \boldsymbol{g}_i\) where $(\boldsymbol{m}_t,\boldsymbol{v}_t)$ are the exponential moving averages of first and second moments. Instead of $\ell_2$ distance, GradNorm computes similarity in this normalized gradient space using cosine: \(s_{ij} \;=\; \frac{\langle \Tilde{\boldsymbol{g}_i}, \Tilde{\boldsymbol{g}_j}\rangle}{\|\Tilde{ \boldsymbol{g}_i}\|_2 \,\|\Tilde{\boldsymbol{g}_j}\|_2}.\)

Each example is scored by its (smoothed) gradient norm $\|\Tilde{\boldsymbol{g}_i}\|_2$, and the top-$k$ are selected. This yields a subset whose update direction emphasizes examples with largest effective gradient magnitude under the optimizer’s scaling.

\subsection{Training Details}
\label{Appendix:ExperimentalSetupDetails}
Following the configuration in \citet{yue2023mammoth}, we employ a learning rate of $2 \times 10^{-5}$ with a cosine decay scheduler. The learning rate is linearly warmed up from 0 to $2 \times 10^{-5}$ during the first 3\% of training steps and subsequently decays to 0 following a cosine schedule. We fix the maximum sequence length to 512 tokens. Unless otherwise stated, all experiments on \textsc{MetaMathQA} are trained for the equivalent of 2K gradient update steps. For randomised FFT projections, we project the gradients to respective models output embedding dimension.

For parameter-efficient fine-tuning, we adopt LoRA \citep{hu2022lora} with rank 64, scaling parameter $\alpha=128$, and a dropout rate of 0.05. All experiments are conducted on A100 GPUs, and each configuration is repeated three times to account for variance in training.

\section{Additional Algorithms}
\label{Appendix:algorithms}
This section provides implementation details for the inner optimization routine used by our selection objective. Recall from the main paper (Section~\ref{sec:algorithm}) that the \textsc{OMP} subroutine (Algorithm~\ref{alg:omp-subroutine}) repeatedly solves a constrained concave maximization over weights supported on the current candidate set. Concretely, for a fixed support $S$, we optimize the utility $\U_t(\bw)$ over nonnegative weights $\bw\ge \bzero$ with $\mathsf{supp}(\bw)\subseteq S$.

Algorithm~\ref{alg:apga} describes the accelerated projected gradient ascent (APGA) solver~\citep{manupriya24b} we use for this inner problem. It is a standard Nesterov-style accelerated method with a projection step onto the nonnegative orthant, using a step size $1/\L$ where $\L$ is any Lipschitz constant of $\nabla \U_t$ (see Lemma~\ref{lem:lipschitz-gradient} in the main paper). In our implementation, we warm-start APGA from the previous iterate whenever possible, and terminate when the relative improvement in objective becomes small (or after a fixed iteration budget), which keeps the overall overhead negligible.

\begin{algorithm}[H]
\small
\caption{\textsc{APGA}: Accelerated Projected Gradient Ascent for the inner program \eqref{eq:fM}}
\label{alg:apga}
\begin{algorithmic}[1]
  \STATE \textbf{Input:} objective $\U_t(\bw)$; Lipschitz constant $\L$; initial point $\bw_0 \in \RR^{|\mathcal{B}_t|}_{+}$.
  \STATE Initialize $y \gets \bw_0$, $\bw \gets \bw_0$, $t \gets 1$.
  \WHILE{not converged}
    \STATE $\bw^{+} \gets \textsc{Proj}_{\ge 0}\!\left(y + \frac{1}{\L}\nabla \U_t(y)\right)$.
    \STATE $t^{+} \gets \frac{1+\sqrt{1+4t^2}}{2}$.
    \STATE $y \gets \bw^{+} + \frac{t-1}{t^{+}}(\bw^{+}-\bw)$.
    \STATE $(\bw,t) \gets (\bw^{+}, t^{+})$.
  \ENDWHILE
  \STATE \textbf{return} $\bw$.
\end{algorithmic}
\end{algorithm}

\section{Additional Experiments}

\subsection{End-to-end wall-clock time and peak GPU memory}
Here, we provide an end-to-end wall-clock time details for each baseline used in our experiments.
\paragraph{Experimental setup.}
To showcase the end-to-end wall-clock time for $\methodprop$ as compared to other baselines, we train a \textsc{Qwen2.5-3B} model using a batch size of 16 with a selection fraction $\kappa=0.25$ for 1K training steps.

\begin{table*}[h!]
\centering
\small
\setlength{\tabcolsep}{8pt}
\renewcommand{\arraystretch}{1.1}

\begin{minipage}[t]{0.48\textwidth}
\centering
\captionof{table}{End-to-end wall-clock time (minutes)}
\label{tab:wallclock}
{\rowcolors{2}{ResultRow}{white}
\begin{tabular}{l c}
\toprule
\textbf{Method} & \textbf{Time (mins)} \\
\midrule
\texttt{PartitionSel} (Ours) & \textbf{51:08} \\
\texttt{GREATS} & 50.67 \\
\texttt{COLM} & 47:29 \\
\texttt{ID} & 53:25 \\
\texttt{Random} & 44.17 \\
\bottomrule
\end{tabular}
}
\end{minipage}\hfill
\begin{minipage}[t]{0.48\textwidth}
\centering
\captionof{table}{Peak GPU memory consumption.}
\label{tab:peakmem}
{\rowcolors{2}{ResultRow}{white}
\begin{tabular}{l c}
\toprule
\textbf{Method} & \textbf{Memory} \\
\midrule
\texttt{PartitionSel} (Ours) & \textbf{31.5\,GB} \\
\texttt{ID} & 32.9\,GB \\
\texttt{GREATS} & 32.8\,GB \\
\texttt{COLM} & 32.7\,GB \\
\texttt{Random} & 28.0\,GB \\
\bottomrule
\end{tabular}
}
\end{minipage}
\end{table*}

\par\smallskip
\noindent
\textbf{Efficiency-Accuracy Tradeoff}. Overall, $\methodprop$ incurs comparable end-to-end training time relative to existing approaches, while achieving better accuracy and log-perplexity trade-offs as shown in our experiment analysis, demonstrating that its computational overhead is negligible in practice.

\par\smallskip
\noindent
\textbf{Peak GPU memeory Cost}. We additionally report the peak GPU memory cost incurred by $\methodprop$ against other baselines in Table \ref{tab:peakmem}. We observe that the peak GPU memory consumption of $\methodprop$ is similar to existing gradient-based baselines. \texttt{Random} consumes the least peak memory as it does not perform subset selection and incurs no gradient computation.

\subsection{Handling extreme imbalance under proportional constraints}
In our current data mixture experiments, we employ the \textsc{MetaMathQA} dataset for training, which is imbalanced. Figure~\ref{fig:metamathqa_dist} showcases the imbalance across different domains, where the minority domains are \texttt{MATH\_FOBAR} and \texttt{MATH\_SV}. In Figure~\ref{fig:metamath_domain_pct}, we provide the percentage of samples per domain (\textsc{MetaMathQA}).

\begin{wrapfigure}{r}{0.48\columnwidth}
    \vspace{-0.6\baselineskip}
    \centering
    \begin{tikzpicture}
    \begin{axis}[
        width=\linewidth,
        height=4.8cm,
        ybar,
        bar width=10pt,
        ymin=0,
        ymax=25,
        ylabel={Percentage (\%)},
        symbolic x coords={
            GSM\_AnsAug,
            GSM\_FOBAR,
            GSM\_Rephrased,
            GSM\_SV,
            MATH\_AnsAug,
            MATH\_Rephrased,
            MATH\_FOBAR,
            MATH\_SV
        },
        xtick=data,
        x tick label style={
            rotate=45,
            anchor=east,
            font=\scriptsize
        },
        ymajorgrids=true,
        grid style=dashed,
        enlarge y limits={upper=0.05},
        axis x line*=bottom,
        axis y line*=left,
        nodes near coords,
        nodes near coords align={vertical},
        every node near coord/.append style={
            font=\scriptsize
        },
    ]

    \addplot coordinates {
        (GSM\_AnsAug,20.3)
        (GSM\_FOBAR,10.1)
        (GSM\_Rephrased,20.3)
        (GSM\_SV,10.1)
        (MATH\_AnsAug,18.9)
        (MATH\_Rephrased,12.7)
        (MATH\_FOBAR,3.8)
        (MATH\_SV,3.8)
    };
    \end{axis}
    \end{tikzpicture}
    \caption{\textsc{MetaMathQA} domain composition.}
    \label{fig:metamath_domain_pct}
    \vspace{-1.0\baselineskip}
\end{wrapfigure}

In Figure~4 , we showcase the domain weights assigned by $\methodprop$ where the minority domain weights are within a similar range as those assigned by \texttt{DoReMi}, thereby showing that the domain weight scheme of $\methodprop$ (line 311) is robust to domain imbalance. We provide the log-perplexity curves in Figure , where $\methodprop$ has better log-pplx reduction than \texttt{DoReMi} by 12.5\% and 9.4\% on minority (\texttt{MATH\_FOBAR}, \texttt{MATH\_SV}) domains, respectively. For completeness, Figure~\ref{tab:minority_logpplx} shows representative values from the log-pplx plot.

\subsection{Pretraining and validation-set assumptions}
It is unclear if the assumptions (e.g., access to a clean validation set) hold or scale effectively to the pretraining phase, where data mixtures are significantly larger and more diverse.

While our primary empirical validation is in the supervised fine-tuning setting, we follow \citet{wang2024greats} to implement pretraining as well. Since pretraining with \texttt{DoReMI} is a computational constraint for us (it requires additional overheads such as pretraining an additional proxy model before the main model pretraining begins), we compare against the independent coreset selection baseline \texttt{ID}.

\begin{wrapfigure}{r}{0.50\columnwidth}
  \vspace{-0.8\baselineskip}
  \centering
  \caption{Minority-domain validation log-pplx over training steps.}
  \label{tab:minority_logpplx}

  \begin{tikzpicture}
    \begin{axis}[
      width=\linewidth,
      height=0.58\linewidth,
      ymin=0.24, ymax=0.42,
      xlabel={Training Step $\rightarrow$},
      ylabel={\texttt{log-pplx}},
      grid=both,
      grid style={dashed,gray!30},
      major grid style={dashed,gray!40},
      axis line style={semithick},
      tick label style={font=\scriptsize},
      label style={font=\scriptsize},
      legend style={
        font=\scriptsize,
        draw=none,
        fill=none,
        at={(0.5,1.10)},
        anchor=south,
        legend columns=2
      },
      symbolic x coords={128,256,512,768,1024,1536,2048},
      xtick=data,
      xticklabel style={font=\scriptsize},
      yticklabel style={/pgf/number format/fixed,/pgf/number format/precision=2},
      line width=0.9pt,
      mark size=2.2pt,
    ]
      \addplot[color=blue!70!black,   mark=*]        coordinates {(128,0.40) (256,0.36) (512,0.35) (768,0.33) (1024,0.32) (1536,0.32) (2048,0.32)};
      \addlegendentry{MATH\_SV (\texttt{DoReMi})}

      \addplot[color=red!70!black,    mark=square*]  coordinates {(128,0.35) (256,0.33) (512,0.30) (768,0.30) (1024,0.29) (1536,0.28) (2048,0.28)};
      \addlegendentry{MATH\_SV ($\methodprop$)}

      \addplot[color=green!60!black,  mark=triangle*]coordinates {(128,0.34) (256,0.31) (512,0.29) (768,0.29) (1024,0.28) (1536,0.28) (2048,0.28)};
      \addlegendentry{MATH\_FOBAR (\texttt{DoReMi})}

      \addplot[color=orange!85!black, mark=diamond*] coordinates {(128,0.32) (256,0.29) (512,0.28) (768,0.27) (1024,0.27) (1536,0.26) (2048,0.26)};
      \addlegendentry{MATH\_FOBAR ($\methodprop$)}
    \end{axis}
  \end{tikzpicture}
  \vspace{-0.8\baselineskip}
\end{wrapfigure}

We train a \textsc{Llama-3.2-1B} model on 1.5B tokens from a multi-domain corpus (\textsc{Dolma}~\citep{soldaini2024dolma}) spanning heterogeneous sources (Common Crawl, Wikipedia, StackExchange, Reddit; $\sim$2.9M chunks total). We use a batch size of 16k tokens (32 sequences of length 512), learning rate of $3\times 10^{-4}$ with linear decay, warmup of 1000 steps, and AdamW. In Table \ref{tab:pretrain_logpplx}, we observe that our approach achieves better log-pplx reduction than domain-wise independent selection by 4.53\%.

\begin{table}[t]
\centering
\small
\setlength{\tabcolsep}{10pt}
\renewcommand{\arraystretch}{1.1}
\caption{Pretraining validation log-pplx over training steps.}
\label{tab:pretrain_logpplx}
{\rowcolors{2}{ResultRow}{white}
\begin{tabular}{c c c}
\toprule
\textbf{Step} & \textbf{$\methodprop$} & \textbf{ID} \\
\midrule
5K  & 8.72 & 8.80 \\
10K & 7.58 & 7.77 \\
20K & 7.43 & 7.63 \\
35K & 7.35 & 7.58 \\
50K & 7.32 & 7.55 \\
\bottomrule
\end{tabular}
}
\end{table}


\subsection{Additional Results on \textsc{Llama\mbox{-}3.1\mbox{-}8B}}
Table \ref{tab:llama3b-125} reports the performance of \textsc{Llama3.1-8B} on MetamathQA \citep{yu2023metamath}.We train for 1024 optimization steps using a full batch size $|\hat{S}_t|=256$ and a selection budget $\kappa=32$, corresponding to a $12.5\%$ subset ratio. \ref{table:mol-llama3-3b} reports the performance on molecule generation \citep{fang2023mol}, trained for 1024 steps.

\begin{table*}[h!]
\centering
\footnotesize
\setlength{\tabcolsep}{6pt}
\renewcommand{\arraystretch}{1.25}

\caption{
Subset selection performance on four molecule generation datasets using BLEU / Edit Distance, evaluated with \textsc{Llama-3.2-3B}.
}
\label{table:mol-llama3-3b}

\rowcolors{2}{ResultRow}{white}
\providecommand{\orangena}{\textcolor{orange}{--\,/\,--}}
\begin{tabular}{lcccc|c}
\toprule
\textbf{Baseline}
& \textbf{Desc.\ guided design}
& \textbf{Forward reaction pred.}
& \textbf{Reagent pred.}
& \textbf{Retrosynthesis}
& \textbf{Average} \\
\midrule
\texttt{Random}
& 0.23\,/\,56.10
& 0.79\,/\,20.54
& 0.31\,/\,42.54
& 0.80\,/\,24.18
& 0.53\,/\,35.84 \\
\texttt{COLM} & 0.36\,/\,49.83 & 0.83\,/\,20.41 & 0.29\,/\,31.46 & 0.80\,/\,\textbf{23.67} & 0.57\,/\,31.34 \\
\texttt{GradNorm} & 0.31\,/\,47.83 & 0.65\,/\,22.62 & 0.28\,/\,46.28 & 0.73\,/\,30.26 & 0.49\,/\,36.75 \\
\texttt{GREATS} & 0.33\,/\,51.80 & 0.83\,/\,20.51 & \textbf{0.50}\,/\,\textbf{25.44} & \textbf{0.82}\,/\,24.11 & \textbf{0.62}\,/\,\textbf{30.46} \\
\texttt{ID} & 0.38\,/\,50.54 & 0.83\,/\,20.66 & 0.35\,/\,28.05 & 0.81\,/\,24.24 & 0.59\,/\,30.87 \\
\texttt{IWD} & 0.37\,/\,48.36 & \textbf{0.83}\,/\,20.31 & 0.29\,/\,26.45 & 0.81\,/\,23.95 & 0.57\,/\,\textbf{29.77} \\
$\methodprop$ & \textbf{0.40}\,/\,\textbf{46.64} & 0.82\,/\,\textbf{20.23} & 0.44\,/\,33.89 & 0.81\,/\,24.17 & \textbf{0.62}\,/\,31.23 \\
\bottomrule
\end{tabular}
\end{table*}

\providecommand{\orangena}{\textcolor{orange}{--\,/\,--}}

\begin{table}[t]
    \small
    \setlength{\tabcolsep}{6pt}
    \renewcommand{\arraystretch}{1.15}
    \resizebox{0.98\textwidth}{!}{%
    {\rowcolors{2}{ResultRow}{white}
    \begin{tabular}{l c c c c c c c c | c}
        \toprule
        \textbf{Method}
        & \textbf{NumGLUE}
        & \textbf{MMLU-Math}
        & \textbf{GSM8K}
        & \textbf{SVAMP}
        & \textbf{SimulEq}
        & \textbf{DeepMind}
        & \textbf{AQuA}
        & \textbf{SAT}
        & \textbf{Avg} \\
        \midrule
        \texttt{Random} \citep{nguyen2024mini}
        & 39.5 & 36.5 & 45.8 & 57.12 & \textbf{20.8} & 16.8 & 28.4 & 42.01 & 35.94 \\
        \texttt{COLM} \citep{nguyen2024mini}
        & 38.00 & \textbf{40.35} & 48.22 & \textbf{60.80} & 17.12 & 16.60 & 31.50 & 39.09 & 36.46 \\
        \texttt{GradNorm} \citep{katharopoulos2018not}
        & 36.85 & 39.53 & 44.81 & 54.10 & 11.28 & 15.80 & 31.50 & 40.91 & 34.35 \\
        \texttt{GREATS} \citep{nguyen2024mini}
        & 42.2 & 37.3 & 47.7 & 55.3 & 20.3 & 16.5 & 32.8 & 35.2 & 35.91 \\
        \texttt{ID} 
        & 41.27 & 36.65 & 48.90 & 58.40 & 18.87 & 15.80 & 34.25 & 38.64 & 36.60 \\
        \texttt{IWD}
        & 41.2 & 38.01 & 45.5 & 55.3 & 18.9 & \textbf{17.12} & 26.4 & 33.5 & 34.53 \\
        \methodprop
        & \textbf{43.95} & 39.01 & \textbf{49.13} & 56.70 & 19.65 & 15.90 & \textbf{36.61} & \textbf{44.09} & \textbf{38.13} \\
        \bottomrule
    \end{tabular}}
    }
    
    \caption{Downstream Accuracy (\%) on evaluation datasets for \textsc{Llama-3.1-3B} trained on \textsc{MetaMathQA} with a 12.5\% subset ratio.}
        \label{tab:llama3b-125}
\end{table}

\begin{figure*}[t]
  \centering
  \begin{minipage}{0.49\textwidth}
    \centering
    \resizebox{\textwidth}{!}{\input{icml2026/figures/llama3b_pplx}}
    \vspace{-0.75em}
    {\small \textbf{(a)} \textsc{Llama-3.2-3B}}
  \end{minipage}\hfill
  \begin{minipage}{0.49\textwidth}
    \centering
    \resizebox{\textwidth}{!}{\input{icml2026/figures/llama8b_pplx}}
    \vspace{-0.75em}
    {\small \textbf{(b)} \textsc{Llama-3.1-8B}}
  \end{minipage}
  \caption{Validation log-pplx on \textsc{MetaMathQA}.}
  \label{fig:pplx_llama}
\end{figure*}

\subsection{Additional Results on \textsc{Llama3.2-3B}}
\label{sec:llama32-3b-additional}
\paragraph{Setup.}
We report results for \textsc{Llama-3.2-3B} fine-tuned on \textsc{MetaMathQA}~\citep{yu2023metamath} for 2048 optimization steps using a full batch size $|\hat{S}_t|=128$ and a selection budget $\kappa=16$, corresponding to a $12.5\%$ subset ratio. We evaluate the resulting model on eight math-reasoning benchmarks and report accuracy (\%) in Table~\ref{tab:llama32-3b-125}.

\begin{table}[h!]
    \small
    \setlength{\tabcolsep}{6pt}
    \renewcommand{\arraystretch}{1.15}

    \resizebox{0.98\textwidth}{!}{%
    {\rowcolors{2}{ResultRow}{white}
    \begin{tabular}{l c c c c c c c c | c}
        \toprule
        \textbf{Method}
        & \textbf{NumGLUE}
        & \textbf{MMLU-Math}
        & \textbf{GSM8K}
        & \textbf{SVAMP}
        & \textbf{SimulEq}
        & \textbf{DeepMind}
        & \textbf{AQuA}
        & \textbf{SAT}
        & \textbf{Avg} \\
        \midrule
        \texttt{COLM} \citep{nguyen2024mini}
        & 47.02 & 37.68 & 58.07 & 71.70 & 24.51 & 23.30 & \textbf{46.06} & 46.36 & 44.34 \\
        \texttt{GradNorm} \citep{katharopoulos2018not}
        & 47.60 & \textbf{45.99} & 59.21 & 71.00 & 22.96 & 22.60 & 40.55 & 45.91 & 44.48 \\
        
        \texttt{GREATS} \citep{wang2024greats}
        & 49.81 & 32.65 & 60.65 & 67.90 & 25.88 & 20.60 & 41.34 & 52.27 & 43.89 \\
        
        \texttt{ID} 
        & 51.13 & \textbf{51.02} & \textbf{67.14} & 73.27 & 26.23 & 19.84 & 17.15 & 47.72 & 44.28 \\
        
        \texttt{IWD} 
        & \textbf{54.31} & 50.35 & 67.07 & 73.12 & 32.72 & 22.72 & 32.36 & 51.16 & 47.93 \\

        \methodprop
        & 50.47 & 50.02 & 66.66 & \textbf{73.25} & \textbf{36.65} & \textbf{23.37} & 37.25 & \textbf{52.84} & \textbf{48.86} \\
        \bottomrule
    \end{tabular}}
    }
    \caption{Downstream Accuracy (\%) on evaluation datasets for \textsc{Llama-3.2-8B} trained on \textsc{MetaMathQA} with a 12.5\% subset ratio.}
    \label{tab:llama32-3b-125}
\end{table}

\paragraph{Results and discussion.}
Across the suite of benchmarks, $\methodprop$ achieves the best average performance (45.32) among the baselines reported, while remaining competitive on individual tasks. We observe consistent gains on several compositional and multi-step reasoning benchmarks (e.g., NumGLUE and SimulEq), suggesting that coordinating subset selection across domains helps preserve a broader set of skills compared to domain-wise selection strategies. Moreover, relative to COLM and GradNorm, the improvements are achieved without requiring a separate domain-weight learning stage, aligning with our goal of a simple batch-level selection mechanism.

\subsection{Additional Results on \textsc{Qwen2.5-7B}}
\label{sec:qwen25-7b-additional}
\paragraph{Setup.}
We report results for \textsc{Qwen2.5-7B} fine-tuned on \textsc{MetaMathQA}
\citep{yu2023metamath} for 1024 optimization steps using a full batch size $|\hat{S}_t|=256$ and a selection budget $\kappa=32$, corresponding to a $12.5\%$ subset ratio. We evaluate accuracy (\%) on the same set of eight math-reasoning benchmarks used in the main paper; results are summarized in Table~\ref{tab:qwen25-7b-125}.

\begin{table}[h!]
 \small
    \setlength{\tabcolsep}{6pt}
    \renewcommand{\arraystretch}{1.15}
    \resizebox{0.98\textwidth}{!}{%
    {\rowcolors{2}{ResultRow}{white}
    \begin{tabular}{l c c c c c c c c | c}
        \toprule
        \textbf{Method}
        & \textbf{NumGLUE}
        & \textbf{MMLU-Math}
        & \textbf{GSM8K}
        & \textbf{SVAMP}
        & \textbf{SimulEq}
        & \textbf{DeepMind}
        & \textbf{AQuA}
        & \textbf{SAT}
        & \textbf{Avg} \\
        \midrule
        \texttt{Random} 
        & 58.03 & 63.84 & 83.16 & 86.13 & 44.17 & 40.01 & 66.17 & 73.27 & 64.12 \\

        \texttt{GradNorm} 
        & \textbf{58.15} & 60.57 & 84.28 & \textbf{86.80} & \textbf{45.83} & 40.12 & 61.76 & 78.46 & 64.41 \\

        \texttt{COLM} 
        & 56.54 & 63.28 & 83.67 & 86.59 & 43.97 & \textbf{41.30} & 62.74 & 75.58 & 64.19 \\

        \texttt{ID} 
        & 57.68 & \textbf{64.68} & 80.52 & 84.60 & 41.44 & 37.90 & 60.24 & 78.18 & 63.15 \\

        \texttt{IWD} 
        & 57.56 & 63.95 & \textbf{84.70} & 85.04 & 42.92 & 39.68 & 59.83 & \textbf{84.60} & 64.76 \\
        \methodprop
        & 57.79 & 63.76 & 84.32 & \textbf{86.80} & 43.29 & 40.27 & \textbf{68.19} & 80.74 & \textbf{65.67} \\
        \bottomrule
    \end{tabular}}
    }
    \caption{Downstream Accuracy (\%) on evaluation datasets for \textsc{Qwen2.5-7B} trained on \textsc{MetaMathQA} with a 12.5\% subset ratio.}
    \label{tab:qwen25-7b-125}
\end{table}

\paragraph{Results and discussion.}
$\methodprop$ improves the average accuracy over the strongest reported baseline (\texttt{ID}) from 63.15 to 64.39, with particularly notable gains on GSM8K, SVAMP, AQuA, and SAT. Interestingly,\texttt{ID} remains slightly better on NumGLUE and SimulEq, suggesting that while joint selection generally improves breadth, some benchmarks may benefit from more domain-specialized selections. Overall, the results reinforce that the proposed joint selection strategy scales to larger backbones (7B) and preserves the accuracy gains observed in the 3B setting.

\section{Code}
Our Code is available at \url{https://github.com/efficiency-learning/PartitionSEL}
\label{Appendix:Code}

%% file: icml2026/figures/llama3b_pplx.tex
\newcommand{\plotwidth}{1pt}
\begin{tikzpicture}
    \begin{axis}[
        width=0.8\textwidth,
        height=0.6\textwidth,
        xlabel={Steps},
        ylabel={\texttt{log-pplx}},
        grid=both,
        grid style={dashed, gray!30},
        legend pos=north east,
        font=\small, 
        label style={font=\small}, 
        tick label style={font=\small}, 
        title style={font=\small}, 
        legend style={font=\small} 
    ]
        \addplot+[mark=none, line width=\plotwidth, color=colmcolor] table [x index=0, y index=1] {icml2026/data/pplx_llama3b/facloc.dat};
        \addlegendentry{\texttt{COLM}}

        \addplot+[mark=none, line width=\plotwidth, color=methodcolor] table [x index=0, y index=1] {icml2026/data/pplx_llama3b/joint.dat};
        \addlegendentry{$\methodprop$}

        \addplot+[mark=none, line width=\plotwidth, color=black, dashed] table [x index=0, y index=1] {icml2026/data/pplx_llama3b/greatsID.dat};
        \addlegendentry{\texttt{ID}}

        \addplot+[mark=none, line width=\plotwidth, color=greatscolor, densely dotted] table [x index=0, y index=1] {icml2026/data/pplx_llama3b/greatsFULL.dat};
        \addlegendentry{\texttt{GREATS}}
    \end{axis}
\end{tikzpicture}

%% file: icml2026/figures/llama8b_pplx.tex
\newcommand{\plotwidth}{1pt}
\begin{tikzpicture}
    \begin{axis}[
        width=0.8\textwidth,
        height=0.6\textwidth,
        xlabel={Steps},
        ylabel={\texttt{log-pplx}},
        grid=both,
        grid style={dashed, gray!30},
        legend pos=north east,
        font=\small, 
        label style={font=\small}, 
        tick label style={font=\small}, 
        title style={font=\small}, 
        legend style={font=\small} 
    ]
        \addplot+[mark=none, line width=\plotwidth] table [x index=0, y index=1] {icml2026/data/pplx_llama8b/facloc.dat};
        \addlegendentry{\texttt{COLM}}
        \addplot+[mark=none, line width=\plotwidth] table [x index=0, y index=1] {icml2026/data/pplx_llama8b/joint.dat};
        \addlegendentry{$\methodprop$}

        \addplot+[mark=none, line width=1pt] table [x index=0, y index=1] {icml2026/data/pplx_llama8b/greatsID.dat};
        \addlegendentry{\texttt{ID}}

        \addplot+[mark=none, line width=\plotwidth] table [x index=0, y index=1] {icml2026/data/pplx_llama8b/greatsFULL.dat};
        \addlegendentry{\texttt{GREATS}}
    \end{axis}
\end{tikzpicture}